\documentclass[accepted]{uai2024} 

\usepackage[american]{babel}


\usepackage{amsmath}
\usepackage{amsthm}
\usepackage{amssymb}
\usepackage{graphicx}
\usepackage{float} 

\usepackage{algorithm}
\usepackage{algpseudocode}

\usepackage[utf8]{inputenc} 
\usepackage[T1]{fontenc}    
\usepackage{booktabs}       
\usepackage{amsfonts}       
\usepackage{nicefrac}       
\usepackage{microtype}      
    \usepackage{xcolor}         
\usepackage{subcaption}
\newtheorem{theorem}{Theorem}
\newtheorem*{theorem*}{Theorem}
\newtheorem{corollary}{Corollary}
\newtheorem{proposition}{Proposition}

%
%
%


\usepackage[round]{natbib}
\renewcommand{\bibname}{References}
\renewcommand{\bibsection}{\subsubsection*{\bibname}}
\bibliographystyle{abbrvnat}

\allowdisplaybreaks

\usepackage{url}

\renewcommand\harvardurl[1]{\textbf{URL:} \url{#1}}
\usepackage{hyperref}       
\hypersetup{
    colorlinks,
    linkcolor={blue!50!black},
    citecolor={blue!50!black},
    urlcolor={blue!50!black}
}

\begin{document}

\title{Posterior Inference on Shallow Infinitely Wide Bayesian Neural Networks under Weights with Unbounded Variance}

\author[1,2]{\href{mailto:jorge.loria@aalto.fi}{Jorge Lor\'ia}}{}
\author[1]{Anindya Bhadra}

\affil[1]{%
    Department of Statistics\\
    Purdue University\\
    West Lafayette, Indiana, USA
}
\affil[2]{%
    Department of Computer Science\\
    Aalto University\\
    Finland
}
 \maketitle
 \begin{abstract}
From the classical and influential works of \citet{neal1996priors}, it is known that the infinite width scaling limit of a Bayesian neural network with one hidden layer is a Gaussian process, \emph{when the network weights have bounded prior variance}. Neal's result has been  extended to networks with multiple hidden layers and to convolutional neural networks, also with Gaussian process scaling limits. The tractable properties of Gaussian processes then allow straightforward posterior inference and uncertainty quantification, considerably simplifying the study of the limit process compared to a network of finite width. Neural network weights with unbounded variance, however, pose unique challenges. In this case, the classical central limit theorem breaks down and it is well known that the scaling limit is an $\alpha$-stable process under suitable conditions. However, current literature is primarily limited to forward simulations under these processes and the problem of posterior inference under such a scaling limit remains largely  unaddressed, unlike in the Gaussian process case. To this end, our contribution is an interpretable and computationally feasible procedure for posterior inference, using a \emph{conditionally Gaussian} representation, that then allows full use of the Gaussian process machinery for tractable posterior inference and uncertainty quantification in the non-Gaussian regime.
\end{abstract}
	
\section{INTRODUCTION}
Gaussian processes (GPs) have been studied as the infinite width limit of Bayesian neural networks with priors on network weights that have finite variance \citep{neal1996priors,williams1996}. This presents some key advantages over Bayesian neural networks with finite widths that usually require computation intensive Markov chain Monte Carlo (MCMC) posterior calculations \citep{neal1996priors} or variational approximations \citep[Chapter 19]{Goodfellow}; in contrast to straightforward posterior inference and probabilistic uncertainty quantification afforded by the GP machinery \citep{williams2006gaussian}. In this sense, the work of \citet{neal1996priors} is foundational. The technical reason for this convergence to a GP is due to an application of the central limit theorem under the bounded second moment condition. More specifically, given an $I$ dimensional input $\mathbf{x}$ and a one-dimensional output $y(\mathbf{x})$, a $K$ layer feedforward deep neural network (DNN) with $K-1$ hidden layers is defined by the recursion:
\small
\begin{align}
z_j^{(l+1)}(\mathbf{x}) &= g\left(b_j^{(l)} + \sum_{i=1}^{p_{l}} w_{ij}^{(l)} z_i^{(l)}(\mathbf{x})\right), \quad l=1,\ldots, K-1, \label{eq:nn1}\\
y(\mathbf{x}) &= \sum_{j=1}^{p_K} w_{j}^{(K)} z_j^{(K)}(\mathbf{x}), \label{eq:nn2}
\end{align}
\normalsize
where $z^{(1)}\equiv \mathbf{x},\; p_1=I,\; p_K=D$ and $g(\cdot)$ is a nonlinear activation function. Thus, the network repeatedly applies a linear transformation to the inputs at each layer, before passing it through a nonlinear activation function. Sometimes a nonlinear transformation is also applied to the final hidden layer to the output layer, but in this paper it is assumed the output is a linear function of the last hidden layer. \citet{neal1996priors} considers the case of a Bayesian neural network with a single hidden layer, i.e., $K=2$. So long as the hidden to output weights $w^{(2)}$ are independent and identically distributed Gaussian, or at least, have a common bounded variance given by $c/p_2$ for some $c>0$, and $g(\cdot)$ is bounded, an application of the classical central limit theorem shows the network converges to a GP as the number of hidden nodes $p_2\to\infty$.  

\subsection{Related Works}
The foundational work of \citet{neal1996priors} was followed by an explicit computation of some of the kernels obtained from this limiting process \citep{williams1996}. 
Recently, Neal's result has been extended to prove fully connected multi-layer feedforward networks \citep{lee2018deep, matthews2018} and convolutional neural networks \citep{garriga-alonso2018deep,novak2019bayesian} also converge to GPs. The Tensor Program of \citet{yang2021tensor} has successfully extended these results to feedforward and recurrent networks of \emph{any architecture.} This is useful for uncertainty quantification by designing emulators for deep neural networks (DNNs) based on GPs, since the behavior of finite-dimensional DNNs for direct uncertainty quantification is much harder to characterize. In contrast, once a convergence to GP can be ensured, well established tools from the GP literature \citep[see, e.g.,][]{williams2006gaussian} can be brought to the fore to allow straightforward posterior inference. The induced covariance function depends on the choice of the nonlinear activation function $g(\cdot)$, and is in general anisotropic. However, it can be worked out in explicit form under a variety of activation functions for both shallow \citep{neal1996priors,williams1996, cho2009kernel} and deep \citep{lee2018deep,matthews2018} feedforward neural networks, where for deep networks usually a recursive formula is available that expresses the covariance function of a given layer conditional on the previous layer. The benefit of depth is that it allows a potentially very rich covariance function at the level of the observed data, even if the covariances in each layer conditional on the layer below are simple. Viewing a GP as a prior on the function space, this allows for a rich class of prior structures. However, the process is still Gaussian in all these cases and our intention in this paper is a departure from the Gaussian world.

For finite width neural networks, non-Gaussian weights have recently been considered by \citet{Fortuin2022bayesian} and  \citet{Fortuin2022Review}. Departures from  $i.i.d.$ weights have also recently received attention \citep{caron2023overparameterised,lee2023deep}. Theoretical results with infinite variance were hinted at by \citet{neal1996priors}, and first proved by \citet{DerLee2005}. Follow-up theoretical results have been obtained in varied architectures for bounded activation functions \citep{Peluchetti,bracale2022infinitechannel} and with unbounded activation functions \citep{Bordino2022infinitely}. However, posterior inference still remains challenging in the infinite-width limit, due to the reasons made clear in the next sub-section.

\subsection{Challenges Posed by Network Weights with Unbounded Prior Variance}
Although the GP literature has been immensely influential for uncertainty quantification in DNNs, it is obvious that a DNN does not converge to a GP if the final hidden to output layer weights are allowed to have unbounded variance, e.g., belonging to $t$ or others in the stable family, such that the scaling limit distribution is non-Gaussian \citep{gnedenko1954limit}. This was already observed by \citet{neal1996priors} who admits: \emph{``in contrast to the situation for Gaussian process priors, whose properties are captured by their covariance functions, I know of no simple way to characterize the distributions over functions produced by the priors based on non-Gaussian stable distributions.''} Faced with this difficulty, \citet{neal1996priors} confines himself to forward simulations from DNNs with $t$ weights, and yet, observes that the network realizations under these weights demonstrate very different behavior (e.g., large jumps) compared to normal priors on the weights. This is not surprising, since Gaussian processes, with their almost surely continuous sample paths, are not necessarily good candidate models for functions containing sharp jumps, perhaps explaining their lack of popularity in certain application domains, e.g., finance, where jumps and changepoints need to be modeled \citep[see, e.g., Chapter 7 of][]{ContTankov}. Another key benefit of priors with polynomial tails, pointed out by \citet{neal1996priors}, is that it allows a few hidden nodes to make a large contribution to the output, while drowning out the others, akin to feature selection. In contrast, in the GP limit, the contributions of individual nodes are averaged out. Thus, there are clear motivations for developing computationally feasible  posterior inference machinery under these non-Gaussian limits.

\citet{neal1996priors} further hints that it may be possible to prove an analogous result using priors that do not have finite variance. Specifically, suppose the network weights are given symmetric $\alpha$-stable priors, which have unbounded variance for all $\alpha \in (0,2)$, and the $\alpha=2$ case coincides with a Gaussian random variable. If $X$ is an $\alpha$-stable random variable, the density does not in general have a closed form, but the characteristic function is:
\begin{equation*}
\phi_X(t) = \exp[it\mu-\nu^\alpha\lvert t\rvert^{\alpha} \{1 -  i\beta \mathrm{sign}(t)\omega(t;\alpha) \}],    
\end{equation*}
where $\omega(t;\alpha)=\tan(\alpha\pi/2),$ for $\alpha \neq 1$ and $\omega(t;\alpha)=-(2/\pi)\log\lvert t\rvert$, for $\alpha=1$. Here $\mu\in \mathbb{R}$ is called the shift parameter, $\alpha\in(0,2]$ is the index parameter, $\beta\in[-1,1]$ is the symmetry parameter, and $\nu>0$ is the scale parameter \citep[][p. 5]{SamorodnitskyTaqqu}. Throughout, we use a zero shift ($\mu=0$) stable variable, and denote it by $X\sim S(\alpha,\nu,\beta)$. Here $\beta=0$ corresponds to the symmetric case, and when ${\beta=1},{\alpha<1},{\nu=1}$, the random variable is strictly positive, which we denote by $S^{+}(\alpha)$. We refer the reader to Supplementary Material~\ref{sup:alpha_st} for some further properties of $\alpha$-stable random variables, as relevant for the present work. \citet{DerLee2005} confirm Neal's conjecture by establishing that the scaling limit of a shallow neural network under $\alpha$-stable priors on the weights is an $\alpha$-stable process. Proceeding further, \citet{Peluchetti} show that the limit process for infinitely wide DNNs with infinite-variance priors is also an $\alpha$-stable processes. However, both \citet{DerLee2005} and \citet{Peluchetti} only consider the forward process and neither considers posterior inference. Inference using $\alpha$-stable densities is not straightforward, and some relevant studies are by \citet{SamorodnitskyTaqqu}, \citet{Lemke2015}, and more recently by \citet{Nolan2020}. The main challenge is that a covariance function is not necessarily defined, precluding posterior inference analogous to the GP case, for example, using the \emph{kriging} \citep{Stein1999} machinery. To this end, our contribution lies in using a representation of the characteristic function of symmetric $\alpha$-stable variables as a normal scale mixture, that then allows a \emph{conditionally Gaussian} representation.  This makes it possible to develop posterior inference and prediction techniques under stable priors on network weights using a latent Gaussian framework.

\subsection{Summary of Main Contributions}
Our main contributions consist of:
\begin{enumerate}
    \item An explicit characterization of the posterior predictive density function under infinite width scaling limits for shallow (one hidden layer) Bayesian neural networks under stable priors on the network weights, using a latent Gaussian representation.

    \item An MCMC algorithm for posterior inference and prediction, with publicly available code. 

    \item Numerical experiments in one and two dimensions that validate our procedure by obtaining better posterior predictive properties for functions with jumps and discontinuities, compared to both Gaussian processes and Bayesian neural networks of finite width. 

    \item A real world application on a benchmark real estate data set from the UCI repository.
\end{enumerate}

\section{INFINITE WIDTH LIMITS OF BAYESIAN NEURAL NETWORKS UNDER WEIGHTS WITH UNBOUNDED VARIANCE}
Consider the case of a shallow, one hidden layer network, with the weights of the last layer being independent and identically distributed with symmetric $\alpha$-stable priors. Our results are derived under this setting using the following proposition of~\citet{DerLee2005}. 
\begin{proposition}\citep{DerLee2005}.
\label{prop:derlee}
    Let the network specified by Equations \eqref{eq:nn1} and \eqref{eq:nn2}, with a single hidden layer ($K=2$), have i.i.d. hidden-to-output weights $w_{j}^{(2)}$ distributed as a symmetric $\alpha$-stable with scale parameter $(\nu/2)^{1/2}{p}_{2}^{-1/\alpha}$. Then $y(\mathbf{x})$ converges in distribution to a symmetric $\alpha$-stable process $f(\mathbf{x})$ as $p_2 \to \infty$ for random input-to-hidden weights. The finite dimensional distribution of $f(\mathbf{x})$, denoted as $(f(\mathbf{x}_1),\dots,f(\mathbf{x}_n))$ for all $n$, where $\mathbf{x}_i\in\mathbb{R}^I$, is multivariate stable with a characteristic function:
    \begin{align}
        \phi(\mathbf{t})&=\mathbb{E}\left[\exp\{i\langle \mathbf{t},f(\mathbf{x})\rangle\}\right] \nonumber \\
        &= \exp\left\{-(\nu/2)^{\alpha/2}\mathbb{E}[\lvert\langle\mathbf{t},\mathbf{g}\rangle\rvert^{\alpha}]\right\},\label{eq:char_fn}
    \end{align}
    where angle brackets denote the inner product, $\mathbf{t} = (t_1,\ldots,t_n)$ is the argument of the characteristic function, $\mathbf{g}=(g(\mathbf{x}_1),\dots,g(\mathbf{x}_n))$, and $g(\mathbf{x})$ is a random variable with the common distribution (across~$j$) of $(z_j^{(2)}(\mathbf{x}_1),\dots,z_j^{(2)}(\mathbf{x}_n))$.
\end{proposition}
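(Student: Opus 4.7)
The plan is to establish convergence of finite-dimensional distributions by the Cram\'er--Wold device: for each fixed $n$, $\mathbf{x}_1,\dots,\mathbf{x}_n\in\mathbb{R}^I$, and $\mathbf{t}\in\mathbb{R}^n$, I would compute the characteristic function of the scalar $\langle\mathbf{t},\mathbf{y}\rangle$, where $\mathbf{y}=(y(\mathbf{x}_1),\dots,y(\mathbf{x}_n))$, show it converges pointwise, and invoke L\'evy's continuity theorem. Writing $\mathbf{z}_j=(z_j^{(2)}(\mathbf{x}_1),\dots,z_j^{(2)}(\mathbf{x}_n))$ and $V_j=\langle\mathbf{t},\mathbf{z}_j\rangle$, a reordering of sums gives $\langle\mathbf{t},\mathbf{y}\rangle=\sum_{j=1}^{p_2} w_j^{(2)}V_j$, which decouples the randomness of the output weights from that of the hidden features.

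The key step is conditioning on the hidden layer, i.e., on $V_1,\dots,V_{p_2}$. Since $w_j^{(2)}\sim S(\alpha,0,(\nu/2)^{1/2}p_2^{-1/\alpha})$ independently across $j$, each $w_j^{(2)}V_j$ is (conditionally) symmetric $\alpha$-stable with scale $(\nu/2)^{1/2}p_2^{-1/\alpha}|V_j|$, and by the stability property of independent symmetric stable sums the conditional distribution of $\langle\mathbf{t},\mathbf{y}\rangle$ is $S(\alpha,0,\sigma)$ with $\sigma^\alpha=(\nu/2)^{\alpha/2}p_2^{-1}\sum_j|V_j|^\alpha$. Therefore
\begin{equation*}
\mathbb{E}\bigl[\exp(i\langle\mathbf{t},\mathbf{y}\rangle)\,\big|\,V_1,\dots,V_{p_2}\bigr]=\exp\!\left\{-(\nu/2)^{\alpha/2}\,\frac{1}{p_2}\sum_{j=1}^{p_2}|V_j|^\alpha\right\}.
\end{equation*}

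Because the first layer weights and biases are i.i.d.\ across the index $j$, the vectors $\mathbf{z}_j$ are i.i.d., and hence so are the $V_j$'s with common distribution that of $\langle\mathbf{t},\mathbf{g}\rangle$. Assuming $\mathbb{E}|\langle\mathbf{t},\mathbf{g}\rangle|^\alpha<\infty$ (automatic when $g(\cdot)$ is bounded), the strong law of large numbers gives $p_2^{-1}\sum_{j}|V_j|^\alpha\to\mathbb{E}|\langle\mathbf{t},\mathbf{g}\rangle|^\alpha$ almost surely. Taking unconditional expectations and using dominated convergence (the conditional characteristic function is bounded by $1$) yields
\begin{equation*}
\phi_{\langle\mathbf{t},\mathbf{y}\rangle}(1)\longrightarrow \exp\!\left\{-(\nu/2)^{\alpha/2}\,\mathbb{E}\bigl[|\langle\mathbf{t},\mathbf{g}\rangle|^\alpha\bigr]\right\},
\end{equation*}
and rescaling $\mathbf{t}\mapsto s\mathbf{t}$ recovers the full characteristic function displayed in \eqref{eq:char_fn}.

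The main obstacle I anticipate is a technical one: justifying the integrability condition $\mathbb{E}|\langle\mathbf{t},\mathbf{g}\rangle|^\alpha<\infty$ that underwrites both the law of large numbers and the dominated convergence step. For a bounded activation this is immediate; otherwise one must impose moment assumptions on the input-to-hidden weights and biases to control $\mathbf{g}$. A secondary but worthwhile issue is checking that the limit object $\phi$ is indeed a valid characteristic function of a symmetric $\alpha$-stable vector (which follows since $\mathbf{t}\mapsto \mathbb{E}|\langle\mathbf{t},\mathbf{g}\rangle|^\alpha$ is the $\alpha$-th power of a norm-type functional induced by a spectral measure on the sphere, matching the canonical representation of symmetric stable vectors). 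Once these are in place, the rest of the argument is essentially bookkeeping with the stability property.
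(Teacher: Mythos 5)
Your argument is correct: conditioning on the hidden-unit values makes $\langle\mathbf{t},\mathbf{y}\rangle$ exactly symmetric $\alpha$-stable with scale determined by the empirical average $p_2^{-1}\sum_j|V_j|^\alpha$, the strong law (valid since $g$ is the bounded sign activation) plus bounded convergence gives the limiting characteristic function, and L\'evy continuity finishes the job. The paper does not prove this proposition but imports it from \citet{DerLee2005}, and your route is essentially the original argument of that reference, so there is nothing to fault beyond the cosmetic point that $\phi_{\mathbf{y}}(\mathbf{t})=\phi_{\langle\mathbf{t},\mathbf{y}\rangle}(1)$ already, so no rescaling step is needed.
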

Following \citet{neal1996priors}, assume for the rest of the paper that the activation function $g(\cdot)$ corresponds to the sign function: $\mathrm{sign}(\xi)=1$, if $\xi>0$; $\mathrm{sign}(\xi)=-1$, if $\xi<0$; and $\mathrm{sign}(0)=0$. For $\mathbf{\xi}\in\mathbb{R}^I$ we define ${g(\mathbf{\xi})=\mathrm{sign}\left(b_0 + \sum_{i=1}^Iw_i\xi_i\right)}$, where $b_0$ and $w_i$ are i.i.d. standard Gaussian variables. The next challenge is to compute the expectation within the exponential in Equation~\eqref{eq:char_fn}.  
To resolve this, we break it into simpler cases. Define $\Lambda$ as the set of all possible functions $\tau:\{\mathbf{x}_1,\dots,\mathbf{x}_n\}\to\{-1,+1\}$. Noting that each $\mathbf{x}_j$ can be mapped to two possible options: $+1$ and $-1$, indicates that there are $2^n$ elements in $\Lambda$. For each $\ell =1,\dots,2^n$, consider $\tau_\ell\in\Lambda$, the event ${A_{\ell}=\{\tau_\ell(\mathbf{x}_j) = g(\mathbf{x}_j)\}_{j=1}^n}$, and the probability ${q_\ell=\mathbb{P}(A_{\ell})}$. By definition $\{A_{\ell}\}_{\ell =1}^{2^n}$ is a set of disjoint events. Next, using the definition of the expectation of discrete disjoint events we obtain:
\begin{equation}
    \mathbb{E}[\lvert \langle \mathbf{t},\mathbf{g}\rangle \rvert^{\alpha}] = \sum_{\ell = 1}^{2^n} q_\ell \left\lvert\sum_{j=1}^n t_j\tau_\ell(\mathbf{x}_j)\right\rvert^{\alpha} \label{eq:expectation},
\end{equation}
where the expectation is over input-to-hidden weights. A na\"ive enumeration sums over an exponential number of terms in $n$, and is impractical. However, details of the computation of $q_\ell$ and $\tau_\ell$ are given in Supplementary Section~\ref{sup:comp_q_s_general}, where we show how to reduce the enumeration over $2^n$ terms in Equation~\eqref{eq:expectation} to  $L=\mathcal{O}(n^I)$ terms using the algorithm of \citet{GoodmanPollack}, by identifying only those configurations with $q_\ell >0$. This allows circumventing the exponential enumeration in $n$, resulting in a polynomial complexity algorithm, depending on the input dimension $I$. Although this computational complexity still appears rather high at a first glance, especially for high-dimensional inputs, for two or three-dimensional problems (e.g., in spatial or spatio-temporal models), the computation is both manageable and practical, and the complexity is similar to the usual GP regression.
\subsection{A Characterization of the Posterior Predictive Density under Stable Network Weights using a Conditionally Gaussian Representation}
While the previous section demonstrated the characteristic function in Equation~\eqref{eq:char_fn} can be computed, the resulting density, obtained via its inverse Fourier transform, does not necessarily have a closed form, apart from specific values of $\alpha$, such as $\alpha=2$ (Gaussian), $\alpha=1$ (Cauchy) or $\alpha=0.5$ (inverse Gaussian). In this section we show that a \emph{conditionally Gaussian} characterization of the density function is still possible for the entire domain of $\alpha \in (0,2]$, facilitating posterior inference. First, note that the result of \citet{DerLee2005} is obtained assuming that there is no intrinsic error in the observation model, i.e., they assume the observations are obtained as $y_i=f(\mathbf{x}_i)$, and the only source of randomness is the network weights. We generalize this to more realistic scenarios and consider an additive error term. That is, we consider the observation model $y_i=f(\mathbf{x}_i)+\varepsilon_i$, where the error terms $\varepsilon_i$ are independent identically distributed normal random variables with constant variance $\sigma^2$. Using the expression for the expectation in the characteristic function from Proposition~\ref{prop:derlee}, we derive the full probability density function, as specified in the following theorem, with a proof in Section~\ref{pf:d_dim_pdf}.
\begin{theorem}\label{th:d_dim_pdf}
    For real-valued observations $\mathbf{y}=(y_1,\dots,y_n)$ under the model $y_i=f(\mathbf{x}_i) + \varepsilon_i;$ where  $ \varepsilon_i\stackrel{i.i.d.}\sim\mathcal{N}(0,\sigma^2)$ and $f(\cdot)$ is as specified in Proposition~\ref{prop:derlee} , denote the matrix $\mathbf{X}=[\mathbf{x}_1,\dots,\mathbf{x}_n]^T$. The probability density function of $(\mathbf{y}\mid \mathbf{X})$ is:
	\begin{align*}
        p(\mathbf{y}\mid \mathbf{X}) =& (2\pi)^{-n/2}\int_{(\mathbb{R}^+)^{L}} \exp\left(-\frac{1}{2}\mathbf{y}^T\mathbf{Q}_\mathbf{s}^{-1}\mathbf{y}\right)\\
        &\times \det(\mathbf{Q}_\mathbf{s})^{-1/2}\prod_{\ell =1 }^Lp_{S^+}(s_\ell)ds_\ell,
	\end{align*}
  where $p_{S^+}$ is the density for a positive $\alpha/2$-stable random variable, and $\mathbf{Q}_\mathbf{s}$ is a positive definite matrix with probability one that depends on $\mathbf{s}=\{s_\ell\}_{\ell =1 }^L$. Specifically, $\mathbf{Q}_\mathbf{s}=\sum_{\ell=1}^Ls_\ell q_\ell^{2/\alpha} \boldsymbol{\tau}_\ell\boldsymbol{\tau}_\ell^T + \sigma^2 \mathbf{I}$, denoting by $\boldsymbol{\tau}_\ell\in\mathbb{R}^n$ the vector with entries $(\tau_\ell(\mathbf{x}_1),\dots,\tau_\ell(\mathbf{x}_n))$.
\end{theorem}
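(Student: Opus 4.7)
The plan is to derive the density of $\mathbf{y}$ by characteristic-function inversion, using the classical normal scale-mixture representation of symmetric $\alpha$-stable laws to recognize $\phi_{\mathbf{y}}$ as a mixture of Gaussian characteristic functions indexed by $L$ independent positive $\alpha/2$-stable scales $s_1,\ldots,s_L$.

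First, I would combine Proposition~\ref{prop:derlee} with the decomposition of $\mathbb{E}[\lvert\langle\mathbf{t},\mathbf{g}\rangle\rvert^\alpha]$ from Equation~\eqref{eq:expectation} and add the independent Gaussian nugget. This yields a characteristic function that factors into $\exp(-\sigma^2\|\mathbf{t}\|^2/2)$ times $L$ symmetric $\alpha$-stable factors of the form $\exp(-(\nu/2)^{\alpha/2} q_\ell \lvert\langle\mathbf{t},\boldsymbol{\tau}_\ell\rangle\rvert^\alpha)$, where $\boldsymbol{\tau}_\ell := (\tau_\ell(\mathbf{x}_1),\ldots,\tau_\ell(\mathbf{x}_n))^T$.

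Next I would apply the scale-mixture identity to each factor: since the Laplace transform of a standard positive $\alpha/2$-stable variable $S$ is $\mathbb{E}[e^{-\lambda S}] = e^{-\lambda^{\alpha/2}}$, plugging in $\lambda = \tfrac{1}{2}\nu q_\ell^{2/\alpha}\langle\mathbf{t},\boldsymbol{\tau}_\ell\rangle^2$ rewrites each stable factor as $\int_0^\infty\exp\bigl(-\tfrac{1}{2} \nu q_\ell^{2/\alpha} s_\ell\langle\mathbf{t},\boldsymbol{\tau}_\ell\rangle^2\bigr)\,p_{S^+}(s_\ell)\,ds_\ell$. Fubini (the integrands are positive and bounded by $1$) then lets me move the product over $\ell$ inside a joint integral over $(s_1,\ldots,s_L)$ and collect the quadratic forms in $\mathbf{t}$ into a single exponent $-\tfrac{1}{2}\mathbf{t}^T\mathbf{Q}(\mathbf{s})\mathbf{t}$ with $\mathbf{Q}(\mathbf{s}) = \sigma^2\mathbf{I} + \nu\sum_\ell q_\ell^{2/\alpha}s_\ell\, \boldsymbol{\tau}_\ell\boldsymbol{\tau}_\ell^T$. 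Because $\tau_\ell(\mathbf{x}_i)^2 = 1$, the diagonal and off-diagonal entries reduce to the forms stated in the theorem, and positive definiteness is immediate since $\mathbf{Q}\succeq\sigma^2\mathbf{I}$ whenever $\sigma^2>0$.

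Finally, the inner factor is the characteristic function of $\mathcal{N}_n(\mathbf{0},\mathbf{Q}(\mathbf{s}))$, so Fourier inversion converts it to the corresponding Gaussian density. Swapping Fourier inversion with the outer integral over $\mathbf{s}$ is justified by a second Fubini argument, since the Gaussian density is integrable uniformly in $\mathbf{s}$ given the lower bound $\mathbf{Q}\succeq\sigma^2\mathbf{I}$. The main obstacle is the constant-matching in the scale-mixture identity: the power $q_\ell^{2/\alpha}$ and the coefficient $\nu$ (rather than $\nu/2$) inside $\mathbf{Q}$ are not transparent from Proposition~\ref{prop:derlee} and only emerge after setting $\tilde\nu_\ell^\alpha = (\nu/2)^{\alpha/2}q_\ell$ and reading off $\tilde\nu_\ell^2 = (\nu/2)q_\ell^{2/\alpha}$ as the variance that multiplies $s_\ell$. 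Once this bookkeeping is done, the Fubini and Fourier-inversion steps are entirely routine.
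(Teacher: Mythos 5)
Your proposal is correct and follows essentially the same route as the paper's proof: the Laplace-transform (normal scale-mixture) identity for the positive $\alpha/2$-stable law applied factor-by-factor to the characteristic function, Fubini to collect the quadratic forms into $\mathbf{Q}(\mathbf{s}) = \sigma^2\mathbf{I} + \nu\sum_\ell q_\ell^{2/\alpha}s_\ell\boldsymbol{\tau}_\ell\boldsymbol{\tau}_\ell^T$, Fourier inversion of the conditional Gaussian characteristic function, and positive definiteness from $\mathbf{Q}\succeq\sigma^2\mathbf{I}$. Your explicit choice $\lambda=\tfrac{1}{2}\nu q_\ell^{2/\alpha}\langle\mathbf{t},\boldsymbol{\tau}_\ell\rangle^2$ matches the paper's bookkeeping exactly, so the constant-matching you flag as the main obstacle is handled correctly.
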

Theorem~\ref{th:d_dim_pdf} is the main machinery we need for posterior inference. We emphasize that $\mathbf{Q}_\mathbf{s}$ is a matrix with random entries, conditional on $\{s_\ell\}_{\ell=1}^L$; and $\{q_\ell\}_{\ell=1}^L$ and $\{\tau_\ell\}_{\ell=1}^L$ are deterministic. Further, the input variables $\mathbf{X}$ are also deterministic. Theorem~\ref{th:d_dim_pdf} implies the hierarchical Gaussian model:
	\begin{align*}
		\mathbf{y}\mid \mathbf{X}, \{s_\ell\}_{\ell=1}^L &\sim \mathcal{N}_n(0,\mathbf{Q}_\mathbf{s}), \; 
		s_\ell \overset{i.i.d.}{\sim} S^+(\alpha/2).
	\end{align*}
Each of the $\tau$s defines a level set for the points that lie in the $+1$ side in contraposition to those that lie in the $-1$ side. A forward simulation of this model is a weighted sum of the $\tau$s, with corresponding positive weight for those that lie closer and a negative weight for the points that lie farther. We further present the following corollaries to interpret the distribution of $\mathbf{Q}_\mathbf{s}$.
\begin{corollary}\label{cor:random_Q}
The matrix $\mathbf{Q}_\mathbf{s}$ in Theorem~\ref{th:d_dim_pdf} is stochastic for all $\alpha\in (0,2)$ and is deterministic when $\alpha=2$.
\end{corollary}
\begin{proof}
Recall, $\mathbf{Q}_\mathbf{s}=\sum_{\ell=1}^Ls_\ell q_\ell^{2/\alpha} \boldsymbol{\tau}_\ell\boldsymbol{\tau}_\ell^T + \sigma^2 \mathbf{I}, \; s_\ell \overset{i.i.d.}{\sim} S^+(\alpha/2)$. Thus, $ s_\ell\to1$, w.p. $1$, as $\alpha\to2$, since an $S^+(1)$ variable is a degenerate point mass at $1$.
\end{proof}
Noting the $\alpha=2$ case is Gaussian, Corollary~\ref{cor:random_Q} indicates $\mathbf{Q}_\mathbf{s}$ is stochastic in the stable limit, but deterministic in the GP limit; a key difference. The lack of representation learning in the GP limit, due to the kernel converging to a degenerate point mass, is a major criticism of the GP limit framework, see for example \citet{pmlr-v139-aitchison21a,pmlr-v202-yang23k}. A useful implication of Corollary~\ref{cor:random_Q} is that the posterior of ${\mathbf{Q}_\mathbf{s} \mid \mathbf{y}}$ is non-degenerate in the stable limit. Numerical results supporting this claim are in Supplementary Section~\ref{sup:simulations}. Specifically, when $\alpha=2$, the limiting process of Proposition~\ref{prop:derlee} is a GP, which has been established to have a deterministic covariance kernel \citep{cho2009kernel}. When $\alpha<2$, which is our main interest, Corollary~\ref{cor:random_Q} ensures that the \emph{conditional} covariance kernel is  stochastic, thereby enabling learning a degenerate posterior of this quantity given the data. This is at a contrast to the degenerate posterior in the Gaussian case, for both shallow and deep infinite-width limits of BNNs, as discussed by \citet{pmlr-v139-aitchison21a}. We further remark here that although the current work only considers shallow networks, this property of a non-degenerate posterior should still hold for deep networks under $\alpha$-stable weights. However, the challenge of relating the \emph{conditional} covariance kernel of each layer to the layer below, analogous to the deep GP case \citep[e.g.,][]{matthews2018}, is beyond the scope of the current work.
\begin{corollary}\label{cor:q_dist_margin}
The marginal distribution of the diagonal entries of the matrix $\mathbf{Q}_\mathbf{s}$ is $\sigma^2+\nu S^+(\alpha/2)$, where the $\sigma^2$ acts as a shift parameter, and the marginal distribution of the entry $i,j$ in the $\mathbf{Q}_\mathbf{s}$ matrix is $S(\alpha/2,\nu , 2p_{ij}-1)$, where ${p_{ij}=\sum_{\ell:\tau_\ell(\mathbf{x}_i)=\tau_\ell(\mathbf{x}_j)}q_\ell}$, the probability that $\mathbf{x}_i$ and $\mathbf{x}_j$ lie on the same side of the hyperplane partition. Further, the entries of $\mathbf{Q}_\mathbf{s}$ are not independent.
\end{corollary}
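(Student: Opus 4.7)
My plan is to exploit the fact that every entry of $\mathbf{Q}$ is a deterministic linear combination of the independent positive $\alpha/2$-stable variates $\{s_\ell\}_{\ell=1}^L$, so the marginal distributions of individual entries and of simple linear combinations of them can be read off via the standard scaling and additivity rules for stable laws. Specifically, if independent $X_i \sim S(\alpha,\beta_i,\nu_i)$, then $\sum_i X_i$ is stable with the same index $\alpha$, scale $\nu$ satisfying $\nu^\alpha = \sum_i \nu_i^\alpha$, and symmetry $\beta$ satisfying $\beta\nu^\alpha = \sum_i \beta_i \nu_i^\alpha$; and for $c \in \mathbb{R}\setminus\{0\}$, $cX_i \sim S(\alpha,\mathrm{sign}(c)\beta_i,|c|\nu_i)$. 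The non-independence claim will fall out of the shared dependence on the same $\{s_\ell\}$ vector.

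For a diagonal entry, $Q_{ii}-\sigma^2 = \nu\sum_\ell q_\ell^{2/\alpha} s_\ell$, each summand $q_\ell^{2/\alpha} s_\ell$ is $S(\alpha/2,1,q_\ell^{2/\alpha})$, and the crucial identity is that $\{q_\ell\}$ is a probability distribution, so $\sum_\ell q_\ell = 1$. The sum rule then gives a combined scale $\nu'$ with $(\nu')^{\alpha/2} = \sum_\ell q_\ell = 1$ and symmetry still equal to $1$, producing a standard $S^+(\alpha/2)$; shifting by $\sigma^2$ and scaling by $\nu$ yields the claimed $\sigma^2 + \nu S^+(\alpha/2)$. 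For an off-diagonal entry I split the sum by the sign of $\tau_\ell(\mathbf{x}_i)\tau_\ell(\mathbf{x}_j)$, writing $Q_{ij} = \nu(Y_+ - Y_-)$, where $Y_+$ collects the indices with $\tau_\ell(\mathbf{x}_i) = \tau_\ell(\mathbf{x}_j)$ and $Y_-$ the remainder. By the diagonal argument these are independent positive stables with scales $p_{ij}^{2/\alpha}$ and $(1-p_{ij})^{2/\alpha}$. Negating $Y_-$ flips its symmetry to $-1$, and applying the sum rule to $Y_+$ and $-Y_-$ produces total scale $(p_{ij} + (1-p_{ij}))^{2/\alpha} = 1$ and symmetry $p_{ij} - (1-p_{ij}) = 2p_{ij}-1$; multiplying by $\nu > 0$ gives the stated $S(\alpha/2,2p_{ij}-1,\nu)$.

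For the non-independence claim, the cleanest certificate is to observe that $Q_{ii} + Q_{jj} = 2\sigma^2 + 2\nu \sum_\ell q_\ell^{2/\alpha} s_\ell = 2\sigma^2 + 2\nu T$ where $T \sim S^+(\alpha/2)$ from the diagonal argument, so the stable scale of $Q_{ii}+Q_{jj}-2\sigma^2$ equals $2\nu$; whereas under independence of $Q_{ii}$ and $Q_{jj}$ with their claimed marginals the sum rule would instead yield scale $\nu \cdot 2^{2/\alpha}$. Since $2 \ne 2^{2/\alpha}$ for any $\alpha < 2$ (which is the regime under consideration, as $s_\ell \sim S^+(\alpha/2)$ requires $\alpha/2 < 1$), the two distributions disagree, forcing dependence. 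The only real obstacle anticipated is clean bookkeeping of the symmetry and scale parameters through the sign-flips and partial sums, which is routine rather than deep.
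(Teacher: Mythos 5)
Your argument is correct and follows essentially the same route as the paper: apply the scaling and closure properties of stable laws to the linear combination $\nu\sum_\ell q_\ell^{2/\alpha}s_\ell\tau_\ell(\mathbf{x}_i)\tau_\ell(\mathbf{x}_j)$, using $\sum_\ell q_\ell=1$ for the diagonal and splitting the off-diagonal sum according to the sign of $\tau_\ell(\mathbf{x}_i)\tau_\ell(\mathbf{x}_j)$ to get scale $\bigl(p_{ij}+(1-p_{ij})\bigr)^{2/\alpha}=1$ and skewness $2p_{ij}-1$. The one place you genuinely improve on the paper is the non-independence claim: the paper merely asserts it from the fact that all entries are linear combinations of the same $\{s_\ell\}$ (which by itself is not a proof of dependence), whereas your scale-mismatch certificate --- that $Q_{ii}+Q_{jj}-2\sigma^2$ has stable scale $2\nu$ rather than the $2^{2/\alpha}\nu$ that independence would force when $\alpha<2$ --- is a rigorous argument; indeed, it could be stated even more simply by observing that all diagonal entries of $\mathbf{Q}$ are the \emph{identical} non-degenerate random variable $\sigma^2+\nu\sum_\ell q_\ell^{2/\alpha}s_\ell$.
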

\begin{proof}
For $\{\mathbf{Q}_\mathbf{s}\}_{ii}-\sigma^2$, we apply Property 1.2.1 of \citet{SamorodnitskyTaqqu}, which we refer as the closure property, to obtain $\nu S^+(\alpha/2)$. Next for $\{\mathbf{Q}_\mathbf{s}\}_{ij}$, we split the summation in two cases: $\tau_\ell(\mathbf{x}_i)=\tau_\ell(\mathbf{x}_j)$, and   $\tau_\ell(\mathbf{x}_i)\neq\tau_\ell(\mathbf{x}_j)$. Using the closure property in the separate splits, we obtain ${\{\mathbf{Q}_\mathbf{s}\}_{ij}\sim S(\alpha/2,\nu p_{ij}^{2/\alpha},1) - S(\alpha/2,\nu (1-p_{ij})^{2/\alpha},1)}$. The result follows by applying the closure property once more. The entries of $\mathbf{Q}_\mathbf{s}$ are not independent, as they are obtained from a linear combination of the independent variables $\{s_\ell\}_{\ell = 1}^L$.
\end{proof}
The value of this corollary does not lie in a numerical or computational speed-up, since the obtained marginals are not independent, but rather in the interpretability that it lends to the model. Explicitly, it indicates that when the points lie closer their conditional covariance is more likely to be positive and when they lie farther apart the covariance is more likely to be negative \citep[see Proposition 1.2.14 of][]{SamorodnitskyTaqqu}.

Now that the probability model is clear, we proceed to the next problem of interest: prediction. 
To this end, we present the following proposition, characterizing the posterior predictive density.
\begin{proposition}\label{prop:d_dim_pred}
Consider a vector of $n$ real-valued observations $\mathbf{y}=(y_1,\dots,y_n)$, each with respective input variables ${\mathbf{x}_1,\dots,\mathbf{x}_n \in \mathbb{R}^I}$, under the model $y_i=f(\mathbf{x}_i)+\varepsilon_i;\varepsilon_i\overset{i.i.d.}{\sim}\mathcal{N}(0,\sigma^2),$ and $m$ new input variable locations: $\mathbf{x}_1^*,\dots,\mathbf{x}_m^*\in\mathbb{R}^I$, with future observations at those locations denoted by $\mathbf{y}^*=(\mathbf{y}^*_1, \ldots, \mathbf{y}^*_m)$. Denote the matrices $\mathbf{X}=[\mathbf{x}_1,\dots,\mathbf{x}_n]^T$ and $\mathbf{X}^*=[\mathbf{x}_1^*,\dots,\mathbf{x}_m^*]^T$. The posterior distribution at these new input variables satisfies the following properties:
\begin{enumerate}
    \item The conditional posterior  of $\mathbf{y}^*\mid \mathbf{y},\mathbf{X},\mathbf{X}^*,\mathbf{Q}_\mathbf{s}$ is an $m$-dimensional Gaussian. Specifically:
    \begin{equation}
        \mathbf{y}^*\mid \mathbf{y},\mathbf{X},\mathbf{X}^*,\mathbf{Q}_\mathbf{s}\sim \mathcal{N}_m (\boldsymbol{\mu}^*,\mathbf{\Sigma}^*),\label{eq:posterior_normal}
    \end{equation}
    where $\boldsymbol{\mu}^*=\mathbf{Q}_{*,1:n}\mathbf{Q}_{1:n,1:n}^{-1}\mathbf{y}$, and $\mathbf{\Sigma}^*={\mathbf{Q}_{*,*} - \mathbf{Q}_{*,1:n}\mathbf{Q}_{1:n,1:n}^{-1}\mathbf{Q}_{1:n,*}}$, using $\mathbf{Q}_\mathbf{s}$ as previously defined for the $n+m$ input variables, and denoting by `$*$' the entries $(n+1):(n+m)$.

    \item The posterior predictive density at the $m$ new locations conditional on the observations $\mathbf{y}$, is given by:
    \begin{align}
        p(\mathbf{y}^*\mid \mathbf{y},\mathbf{X},\mathbf{X}^*) =& \int_{(\mathbb{R}^+)^L} p(\mathbf{y}^*\mid \mathbf{y},\mathbf{X},\mathbf{X}^*,\mathbf{Q}_\mathbf{s}) \nonumber \\
        &\times p(\mathbf{Q}_\mathbf{s}\mid \mathbf{y},\mathbf{X})d\mathbf{Q}_\mathbf{s},\label{eq:ppd}
    \end{align}
    where $p(\mathbf{y}^*\mid \mathbf{y},\mathbf{X},\mathbf{X}^*,\mathbf{Q}_\mathbf{s})$ is the conditional posterior density of $\mathbf{y}^*$, $\mathbf{Q}_\mathbf{s}$ is as previously described for the $n+m$ input variables, and the integral is over the values determined by $\{s_\ell\}_{\ell=1}^L$.
\end{enumerate}
\end{proposition}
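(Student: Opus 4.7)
The plan is to apply Theorem~\ref{th:d_dim_pdf} to the augmented observation vector $(\mathbf{y}^T, \mathbf{y}^{*T})^T$ with the stacked input matrix $\tilde{\mathbf{X}} = [\mathbf{X}^T, \mathbf{X}^{*T}]^T$ of dimension $(n+m) \times I$. Since Theorem~\ref{th:d_dim_pdf} holds for an arbitrary finite sample, this immediately furnishes the conditionally Gaussian representation $(\mathbf{y}^T, \mathbf{y}^{*T})^T \mid \tilde{\mathbf{X}}, \{s_\ell\}_{\ell=1}^L \sim \mathcal{N}_{n+m}(0, \mathbf{Q})$, where now the partitions $\{\tau_\ell, q_\ell\}$, and hence $L$, are computed over all $n+m$ inputs, and $\mathbf{Q}$ is almost surely positive definite.

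For Part~1, I would invoke the standard conditional distribution formula for partitioned multivariate Gaussians. Writing $\mathbf{Q}$ in block form with the first $n$ indices corresponding to $\mathbf{y}$ and the last $m$ indices (denoted by $*$) corresponding to $\mathbf{y}^*$, classical Gaussian conditioning yields $\mathbf{y}^* \mid \mathbf{y}, \mathbf{Q} \sim \mathcal{N}_m(\boldsymbol{\mu}^*, \boldsymbol{\Sigma}^*)$ with $\boldsymbol{\mu}^* = \mathbf{Q}_{*,1:n}\mathbf{Q}_{1:n,1:n}^{-1}\mathbf{y}$ and $\boldsymbol{\Sigma}^* = \mathbf{Q}_{*,*} - \mathbf{Q}_{*,1:n}\mathbf{Q}_{1:n,1:n}^{-1}\mathbf{Q}_{1:n,*}$, exactly as in Equation~\eqref{eq:posterior_normal}. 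Almost-sure invertibility of $\mathbf{Q}_{1:n,1:n}$ follows from the almost-sure positive definiteness of the full $\mathbf{Q}$ guaranteed by Theorem~\ref{th:d_dim_pdf}.

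For Part~2, the posterior predictive density follows from straightforward marginalization. By the chain rule, $p(\mathbf{y}^*, \mathbf{Q} \mid \mathbf{y}, \mathbf{X}, \mathbf{X}^*) = p(\mathbf{y}^* \mid \mathbf{y}, \mathbf{X}, \mathbf{X}^*, \mathbf{Q}) \, p(\mathbf{Q} \mid \mathbf{y}, \mathbf{X}, \mathbf{X}^*)$, and integrating out $\mathbf{Q}$ over its posterior support yields Equation~\eqref{eq:ppd}. Because $\mathbf{Q}$ is a deterministic function of the $L$-dimensional latent vector $\{s_\ell\}_{\ell=1}^L$ (given the deterministic $q_\ell$ and $\tau_\ell$ computed from $\tilde{\mathbf{X}}$), the integral over $\mathbf{Q}$ reduces to an integral over $(\mathbb{R}^+)^L$ as claimed.

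The main conceptual point to be careful about is that the partition set $\Lambda$ must be recomputed on all $n+m$ inputs: adding a test point $\mathbf{x}_j^*$ may refine the arrangement of hyperplane cells from Section~\ref{sec:qtau} and thereby enlarge $L$, so one cannot simply reuse the $\mathbf{Q}$ and the latent variables calibrated from the training inputs alone. Once this augmentation is carried out, both parts reduce to standard Gaussian conditioning and marginalization, with essentially no further work beyond invoking Theorem~\ref{th:d_dim_pdf} on the enlarged input set.
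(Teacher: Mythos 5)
Your proposal is correct and follows essentially the same route as the paper, which likewise obtains Part~1 as an immediate application of Theorem~\ref{th:d_dim_pdf} together with the conditional density of a partitioned multivariate Gaussian, and Part~2 by standard marginalization over the latent scales. Your additional remark that the partition set $\Lambda$ (and hence $L$ and $\mathbf{Q}$) must be recomputed on all $n+m$ inputs is a correct and worthwhile clarification of what the paper leaves implicit in the phrase ``using $\mathbf{Q}$ as previously defined for the $n+m$ input variables.''
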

\begin{proof}
    The first is an immediate application of Theorem~\ref{th:d_dim_pdf} and the conditional density of a multivariate Gaussian. The second part follows from a standard application of marginal probabilities.
\end{proof}
 
\section{AN MCMC SAMPLER FOR THE POSTERIOR PREDICTIVE DISTRIBUTION}
Dealing with $\alpha$-stable random variables includes the difficulty that the moments of the variables are only finite up to an $\alpha$ power. Specifically, for $\alpha<2$, if $X\sim S(\alpha,\nu,\beta)$, then $\mathbb{E}[\lvert X \rvert ^r]=\infty$  if $r\geq\alpha$, and is finite otherwise \citep[][Property 1.2.16]{SamorodnitskyTaqqu}. 
To circumvent dealing with potentially ill-defined moments, we propose to sample from the full posterior. For fully Bayesian inference, we assign $\sigma^2$ a half-Cauchy prior \citep{gelman2006prior} and iteratively sample from the posterior predictive distribution by cycling through $(\mathbf{y}^*,\mathbf{Q},\sigma^2)$ in an MCMC scheme, 
as described in Algorithm~\ref{alg:samp_full_d}, which has computational complexity of the order of $\mathcal{O}(T [(n+m)^In^2 + m^3])$, where $T$ is the number of MCMC simulations used. The method of \citet{chambers1976method} is used for simulating the stable variables. 
An implementation of our algorithms is freely available at \href{https://github.com/loriaJ/alphastableNNet}{https://github.com/loriaJ/alphastableNNet}.
\begin{algorithm}[!htb]
	\caption{A Metropolis--Hastings sampler for the posterior predictive distribution}\label{alg:samp_full_d}
	\begin{algorithmic}[-1] 
        {\Require Observations $\mathbf{y}\in\mathbb{R}^n$, with $I$-dimensional input variables $\mathbf{X}\in\mathbb{R}^{n\times I}$, new input variables $\mathbf{X}^*\in\mathbb{R}^{m\times I}$, and number of MCMC iterations $T$.  \\ \hspace{-.6cm}\textbf{Output:} Posterior predictive samples $\{\mathbf{y}^*_{k}\}_{k=1}^T$}
		\State Obtain $\Lambda$ for $(\mathbf{X},\mathbf{X}^*)$ using Algorithm~\ref{alg:goodman}.
        \State Compute $\{q_\ell\}_{\ell = 1}^L$ as described in Supplementary Section~\ref{sup:comp_q_s_general}. 
		\State Initialize $\mathbf{Q}^{(0)}_\mathbf{s}$ using independent samples of $s_\ell$ from the prior distributions.
		\For{$k=1,\dots,T$}
		      \State Simulate $\mathbf{Q}^{(k)}_\mathbf{s}\mid \mathbf{y},\mathbf{Q}^{(k-1)}_\mathbf{s}$ using Algorithm~\ref{alg:samp_Q}.
		      \State Compute $\boldsymbol{\mu}^*_k$ and $\mathbf{\Sigma}^*_k$ using $\mathbf{Q}^{(k)}_\mathbf{s}$ in Part 1 of Proposition~\ref{prop:d_dim_pred}.
		      \State Simulate $\mathbf{y}_{k}^{*}\mid (\mathbf{y},\mathbf{Q}^{(k)}_\mathbf{s})\sim \mathcal{N}_m(\boldsymbol{\mu}^*_k,\boldsymbol{\Sigma}^*_k)$.
		\EndFor
		\State \Return $\{\mathbf{y}_{k}^{*}\}_{k=1}^T$.
	\end{algorithmic}
\end{algorithm}

There are two hyper-parameters in our model: $\alpha,\nu$. We propose to select them by cross validation on a grid of $(\alpha,\nu)$, and selecting the result with smallest mean absolute error (MAE). Another possible way to select $\alpha$ is by assigning a prior. The natural choice for $\alpha$ is a uniform prior on $(0,2)$, however the update rule would need to consider the densities of the $L$ such $\alpha/2$-stable densities $p_{S^+}$, which would be computationally intensive as there is no closed form to this density apart from specific values of $\alpha$. A prior for $\nu$ could be included but a potential issue of identifiability emerges, similar to that  identified for the Mat\'ern kernel \citep{Zhang2004}. We leave these open for future research.

\section{NUMERICAL EXPERIMENTS}\label{sec:nums}
We compare our method against the predictions obtained from three other methods. 
The first two are methods for Gaussian processes that correspond to the two main approaches in GP inference: maximum likelihood with a Gaussian covariance kernel \citep{mlegp}, and an MCMC based Bayesian procedure using the Mat\'ern kernel \citep{tgp}; the third method is a two-layer Bayesian neural network (BNN) using a single hidden layer of 100 nodes with Gaussian priors, implemented in \texttt{pytorch} \citep{pytorch}, and fitted via a variational approach. The choice of a modest number of hidden nodes is intentional, so that we are away from the infinite width GP limit, and the finite-dimensional behavior can be visualized. The respective implementations are in the \texttt{R} packages \texttt{mlegp},  \texttt{tgp}, and the \texttt{python} libraries \texttt{pytorch} and \texttt{torchbnn}. The estimates used from these methods are respectively the kriging estimate, posterior median and posterior mean. We tune our method by cross-validation over a grid of $(\alpha,\nu)$. We use point-wise posterior median as the estimate, and report the values with smallest mean absolute error (MAE) and the optimal parameters. Results on timing and additional simulations are in Supplementary Section \ref{sup:simulations}, including the posterior quantiles of $\mathbf{Q}_\mathbf{s}\mid \mathbf{y}$, showing the posterior is non-degenerate and stochastic. This suggests learning a non-degenerate posterior $\mathbf{Q}_\mathbf{s}\mid \mathbf{y}$ is possible, unlike in the GP limit where the kernel is degenerate \citep{pmlr-v139-aitchison21a,pmlr-v202-yang23k}. We use a data generating mechanism of the form $y=f(x)+\varepsilon$, where the $f$ is the true function. The overall summary is that when $f$ has at least one discontinuity, our method performs better at prediction than the competing methods, and when $f$ is continuous the proposed method performs just as well as the other methods. This provides empirical support that the assumption of \emph{continuity} of the true function cannot be disregarded in the \emph{universal approximation} property of neural networks \citep{hornik1989multilayer}, and the adoption of infinite variance prior weights might be a crucial missing ingredient for successful posterior prediction when the truth is discontinuous.

\subsection{Experiments in One Dimension}\label{sec:1d}
We consider a function with three jumps: $f(x) = 5 \times \mathbf{1}_{\{x\geq 1\}} +5 \times \mathbf{1}_{\{ -1 \leq x < 0\}}$, to which we add a Gaussian noise with $\sigma=0.5$. We consider $x\in [-2,2],$ with 40 equally spaced points as the training set, and 100 equally spaced points in the testing set.
We display the two-panel Figure~\ref{fig:3jumps_all}, showing the comparisons between the four methods. The boxplots in the left panel show that the proposed method -- which we term \textit{``Stable,''} has the smallest  prediction error. The right panel shows that the BNN, the GP based fully Bayesian, and maximum likelihood methods have much smoother predictions than the Stable method. This indicates the inability of these methods to capture sharp jumps as well as the Stable method, which very clearly captures them. The Stable method obtains the smallest cross validation error for this case with $\alpha^*=1.1$ and $\nu^*=1$.
\begin{figure}[!h]
    \centering
    \includegraphics[width=\linewidth]{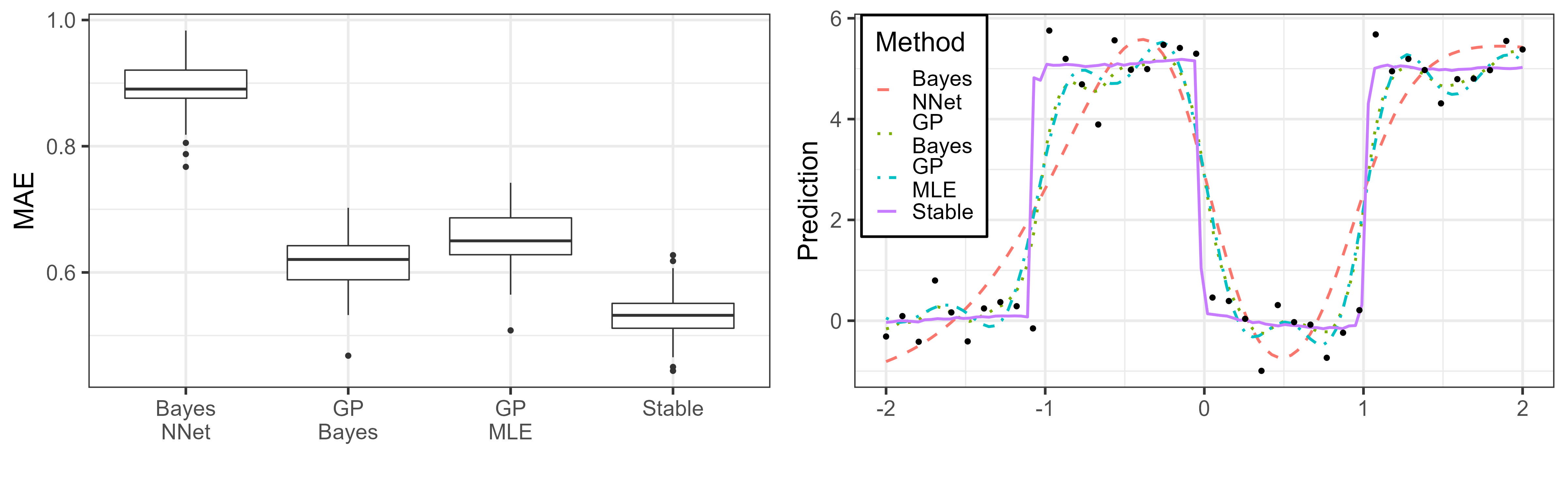}
    \caption{\emph{Left:} Boxplots of mean absolute error of out-of-sample prediction over test points, and \emph{Right:} predicted values over 100 points on a regular grid on $[-2,2]$. Training points in black dots.}
    \label{fig:3jumps_all}
\end{figure}

Figure~\ref{fig:UQ_1d} displays the uncertainty of the GP Bayes and Stable methods, for the same setting, using the $90\%$ posterior predictive intervals. In general, the intervals are narrower for the stable case.
\begin{figure}[!h]
    \centering
    \includegraphics[width =\linewidth]{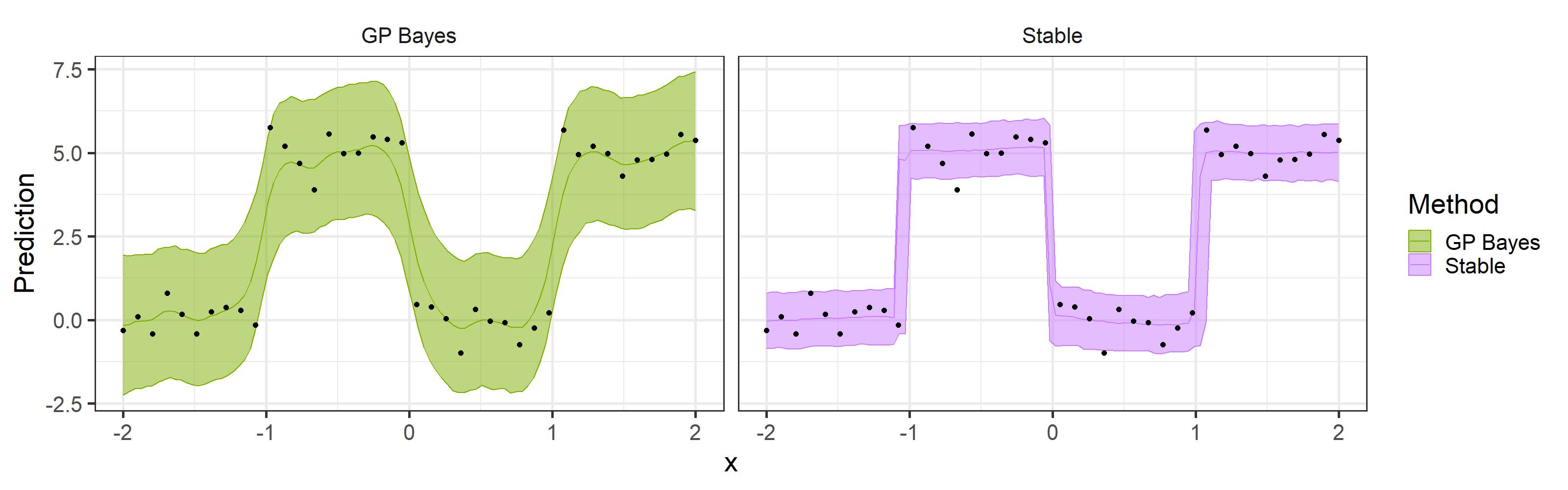}
    \caption{The point-wise $90\%$ posterior predictive intervals for GP Bayes and Stable over 100 points on a regular grid on $[-2,2]$, training points in black.}
    \label{fig:UQ_1d}
\end{figure}
\subsection{Experiments in Two Dimensions}\label{sec:2d}
We consider the function: $f(x_1,x_2)=5\times \mathbf{1}_{\{x_1 >0\}} + 5\times \mathbf{1}_{\{x_2 > 0\}}$, with additive Gaussian noise with $\sigma=0.5$, and observations on an equally-spaced grid of 49 points in the square $[-1,1]^2$. In Figure~\ref{fig:2d_all} we display the boxplots and contour plots for all methods for out-of-sample prediction on an equally spaced grid of $9\times9$ points in the same square. The methods that employ Gaussian processes (GP Bayes and GP MLE) and BNN seem to have smoother transitions between the different quadrants, whereas the Stable method captures the sharp jumps better. This is reinforced by the prediction errors displayed in the left panel. For this example, the Stable method obtains the smallest cross validation error with $\alpha^*=1.1,\nu^*=1$.
\begin{figure}[!h]    
    \centering
    \includegraphics[width=\linewidth]{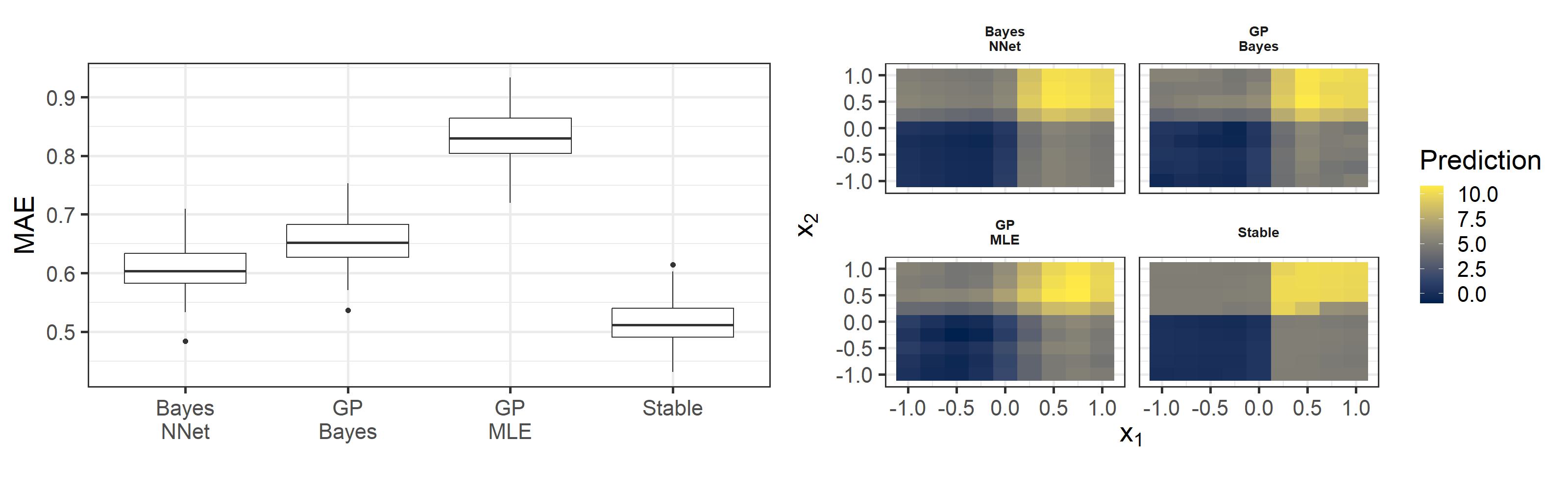}
    \caption{\emph{Left:} Boxplots of mean absolute error (MAE) of out-of-sample prediction over test points, and \emph{Right:} predicted values over a $9\times 9$ grid on $[-1,1]^2$.}
    \label{fig:2d_all}
\end{figure}

We present quantiles of the posterior predictive distribution for GP Bayes and Stable methods in Figure~\ref{fig:UQ_2d}. Our results show sharper jumps using the Stable method, when the true function has jump discontinuities.
\begin{figure}[!h]
    \centering
    \includegraphics[width =\linewidth]{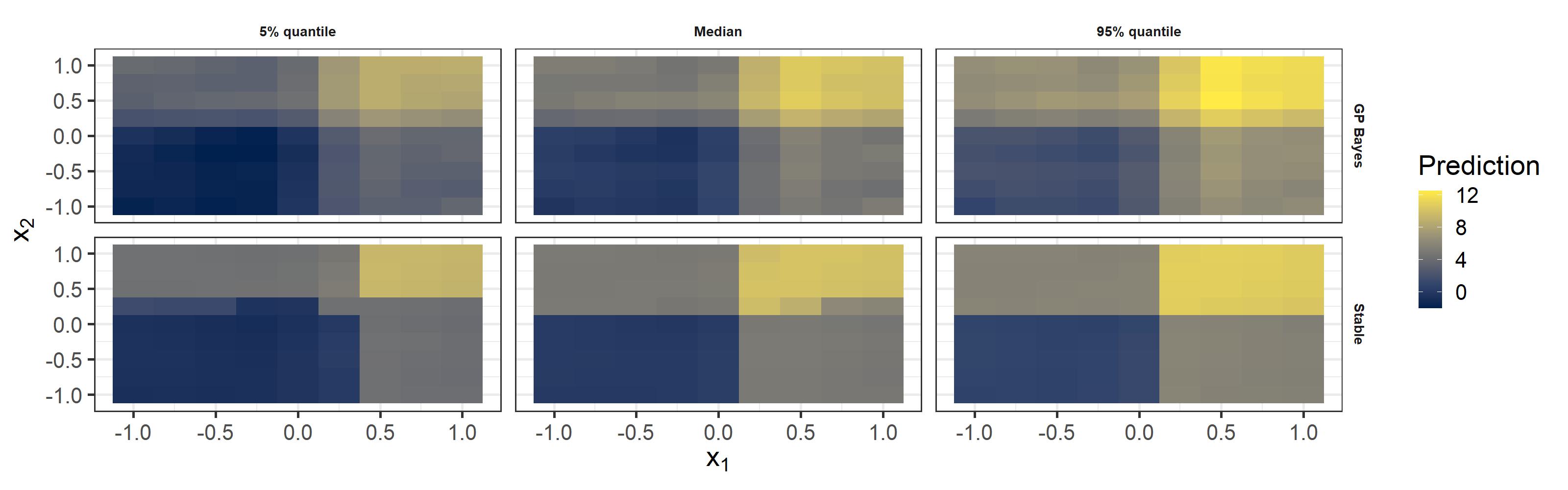}
    \caption{Posterior predictive quantiles at the $5\%,50\%$, and $95\%$ levels for GP Bayes (\emph{upper}) and Stable (\emph{lower}) over a $9\times 9$ grid on $[-1,1]^2$.}
    \label{fig:UQ_2d}
\end{figure}
\section{OUT OF SAMPLE PREDICTION ON REAL ESTATE VALUATION DATA IN TAIPEI}\label{sec:real_state}
Valuation of real estate properties in Taipei, Taiwan were collected by \citet{IChengTzuKuang} in different locations. The data are available from the UCI repository\footnote{\url{https://archive.ics.uci.edu/dataset/477/real+estate+valuation+data+set}}, and is a benchmark dataset. We apply our method to the spatial locations of the properties, to predict the valuations of the real estate dataset. We use 276 locations for training and 138 for testing. We compare the performance of our method to the three methods mentioned in the previous section through the mean absolute error. The results are displayed in Table~\ref{tab:MAE_real_estate}, showing a competitive MAE under the proposed approach. Figure~\ref{fig:posterior_predictions_real_estate} displays the posterior predictive quantiles on the validation set, with narrower intervals under the proposed stable method in most cases. 
\begin{table}[!h]
    \centering
    \caption{Mean absolute error of predictions by method and standard errors computed on 10 random training--testing splits in the real estate data set.} 
    \label{tab:MAE_real_estate}
       \begin{tabular}{c c c c c}
     \toprule
         & Stable & GP MLE & GP Bayes & Bayes NNet \\ \hline
         MAE & 0.415 & 0.483 & 0.402  & 0.501 \\
         (SE) & (0.07) & (0.07) & (0.05) & (0.07) \\
    \bottomrule
    \end{tabular}
\end{table}

Supplementary Sections~\ref{sec:ablation} and~\ref{sec:common_data} present results on ablation experiments and additional data sets with larger sample sizes, including recent data on S\&P stock index.
\begin{figure}[!t]
    \centering
    \includegraphics[width=\linewidth]{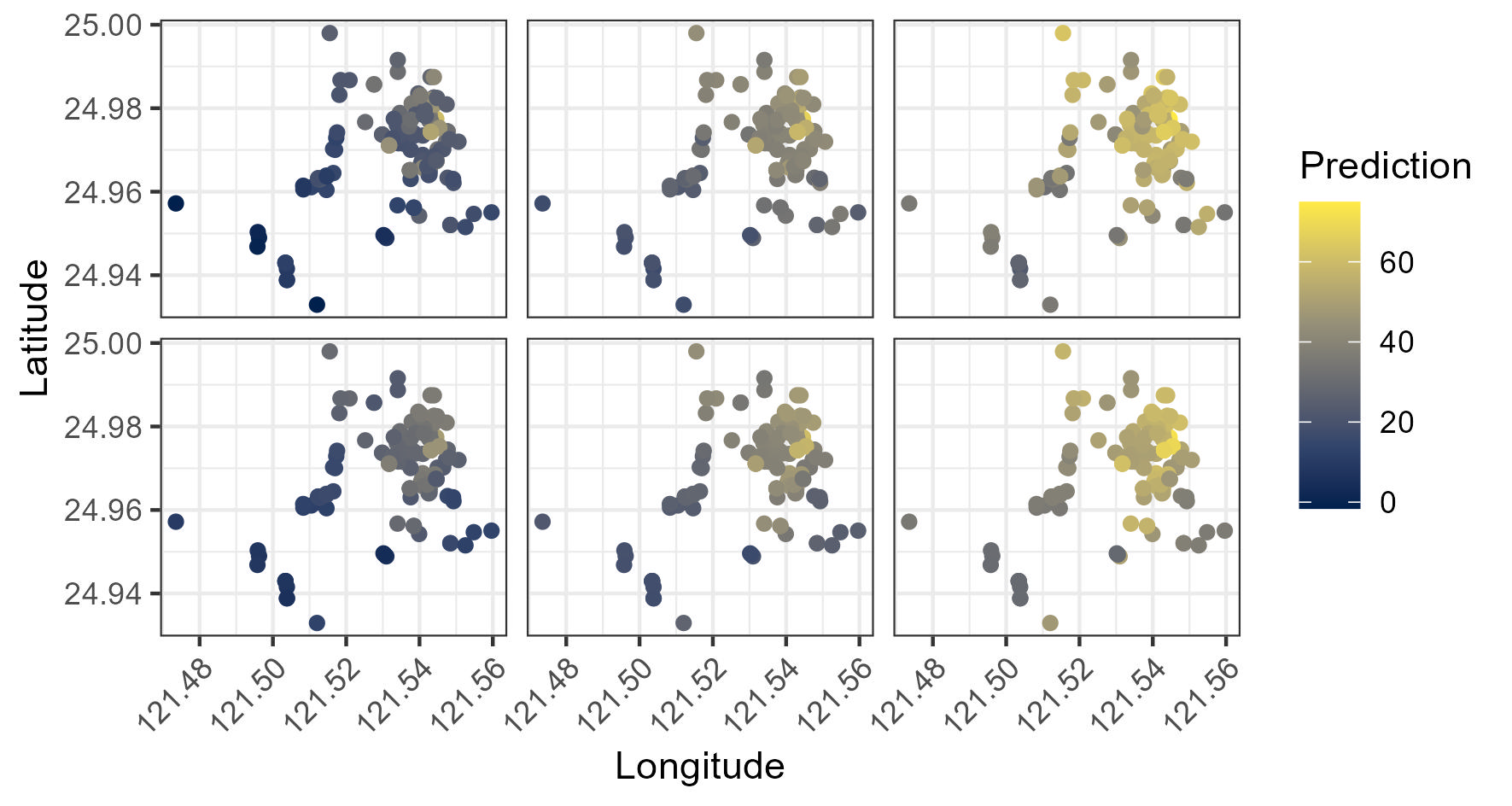}
    \caption{Posterior predictive quantiles at the 5\%, 50\%, and 95\% levels for GP Bayes (\emph{upper}) and Stable (\emph{lower}) on validation.}
    \label{fig:posterior_predictions_real_estate}
\end{figure}
\section{CONCLUSIONS}
We develop a novel method for posterior inference and prediction for infinite width limits of shallow (one hidden layer) BNNs under weights with infinite prior variance. While the $\alpha$-stable forward scaling limit in this case has been known in the literature \citep{DerLee2005, Peluchetti}, the lack of a covariance function precludes the inverse problem of feasible posterior inference and prediction, which we overcome using a conditionally Gaussian representation. There is a wealth of literature on the universal approximation property of both shallow and deep neural networks, following the pioneering work of \citet{hornik1989multilayer}, but they work under the assumption of a \emph{continuous} true function. Our numerical results demonstrate that when the truth has jump discontinuities, it is possible to obtain much better results with a BNN using weights with unbounded prior variance. The fully Bayesian posterior also allows straightforward probabilistic uncertainty quantification for the infinite width scaling limit under $\alpha$-stable priors on network weights.

Several future directions could naturally follow from the current work. The most immediate is perhaps an extension to posterior inference for deep networks under stable priors, where the width of each layer simultaneously approaches infinity, and we strongly suspect this should be possible. The role of the non-degenerate posterior of $\mathbf{Q}_\mathbf{s}\mid \mathbf{y}$ on deep generalizations and representation learning merits a  thorough investigation and suggests crucial differences from a GP limit \citep{pmlr-v139-aitchison21a,pmlr-v202-yang23k}. Developing  analogous results under non-i.i.d. or tied weights to perform posterior inference under the scaling limits for Bayesian convolutional neural networks should also be of interest. Finally, one may of course investigate alternative activation functions, such as the hyperbolic tangent, which will lead to a different characteristic function for the scaling limit. 

\section{Proof of Theorem \ref{th:d_dim_pdf}}\label{pf:d_dim_pdf}
Our derivations rely on Equation 5.4.6 of \citet{UchaikinZolotarev}, which states that for $\alpha_0\in (0,1)$ and for all positive $\lambda$ one has: ${\exp(-\lambda^{\alpha_0}) = \int_0^\infty \exp(-\lambda t) p_{S^+}(t)dt},$ where $p_{S^+}$ is the density function of a positive $\alpha_0$-stable random variable. Using $\lambda=\nu z^2$ and $\alpha_0=\alpha/2$ and the fact that $z^2=\lvert z \rvert^2$, we obtain for $\alpha\in (0,2)$ that: 
\small
\begin{equation}
  \exp(-\nu^{\alpha/2}\lvert z\rvert^\alpha)=\int_{0}^\infty \exp(-\nu z^2t)p_{S^+}(t)dt \label{eq:normal_mixture},
\end{equation}
\normalsize
where $p_{S^+}$ is the density function of a positive $\alpha/2$-stable random variable and $z\in\mathbb{R}$. By Equation~\eqref{eq:expectation} of \citet{DerLee2005},  and the characteristic function of independent normally distributed error terms with a common variance $\sigma^2$, we have that:
\small
\begin{align*}
    \phi_\mathbf{y}(\mathbf{t})
    =& \exp\left(- \sum_{\ell = 1}^L2^{-\alpha/2}\nu^{\alpha/2}q_\ell\left\lvert \sum_{j=1}^nt_j\tau_\ell(\mathbf{x}_j)\right\rvert^{\alpha} \right. \\
    & \hspace{1cm} \left. -\frac{1}{2}\sum_{j=1}^n \sigma^2t_j^2\right)\\
    =& \prod_{\ell=1}^L\exp\left(-2^{-\alpha/2}\nu^{\alpha/2}q_\ell \left\lvert \sum_{j=1}^n t_j\tau_\ell(\mathbf{x}_j) \right\rvert^\alpha\right) \\
    &\times\exp\left(-\frac{1}{2}\sum_{j=1}^n \sigma^2t_j^2\right)\\
    =& \prod_{\ell=1}^L\Bigg\{\int_{0}^\infty \exp\left(-\frac{1}{2}\nu s_\ell q_\ell^{2/\alpha} \left( \sum_{j=1}^n t_j\tau_\ell(\mathbf{x}_j) \right)^2\right) \\
    & \times  p_{S^+}(s_\ell)ds_\ell \Bigg\}\times \exp\left(-\frac{1}{2}\sum_{j=1}^n \sigma^2t_j^2\right)\\
    =& \prod_{\ell=1}^L \Bigg\{\int_{0}^\infty \exp\left(-\frac{1}{2}\nu s_\ell q_\ell^{2/\alpha} \mathbf{t}^T \mathbf{M}_\ell \mathbf{t} \right)  \\
    &\times p_{S^+}(s_\ell)ds_\ell\Bigg\}\times \exp\left(-\frac{1}{2}\sum_{j=1}^n \sigma^2t_j^2\right),
\end{align*}
\normalsize
where the third equality follows by using Equation~\eqref{eq:normal_mixture}, and in the last equality we define $\mathbf{M}_\ell$ as the matrix with ones in the diagonal and with the $(i,j)$th entry given by $\tau_\ell(\mathbf{x}_i)\tau_\ell(\mathbf{x}_j), \; i\ne j$. Next, using the fact that the densities are over the independent variables $\{s_\ell\}_{\ell=1}^L$ we bring the product inside the integrals and employ the property of the exponential to obtain:
\small
\begin{align*}
    \phi_{\mathbf{y}}(\mathbf{t}) 
    =&\int_{(\mathbb{R}^+)^{L}} \exp\left(- \frac{1}{2}\sum_{j=1}^nt_j^2\sigma^2 - \frac{1}{2}\nu\sum_{\ell=1}^L  s_\ell q_\ell^{2/\alpha}\mathbf{t}^T \mathbf{M}_\ell\mathbf{t} \right) \\
    & \times\prod_{\ell=1}^Lp_{S^+}(s_\ell)ds_\ell\\
    =& \int_{(\mathbb{R}^+)^{L}} \exp\left(- \frac{1}{2}\mathbf{t}^T\mathbf{Q}_\mathbf{s}\mathbf{t} \right)\prod_{\ell=1}^Lp_{S^+}(s_\ell)ds_\ell,
\end{align*}
\normalsize
using on the second line the definition of $\mathbf{Q}_\mathbf{s}$. The required density is now obtained by the use of the inverse Fourier transform on the characteristic function:
\small
\begin{align*}
    p(\mathbf{y}\mid \mathbf{X}) 
    =& \int_{\mathbb{R}^n}\phi_\mathbf{y}(\mathbf{t})\exp(i\langle\mathbf{t},\mathbf{y}\rangle)\prod_{j=1}^ndt_j\\
    =& \int_{\mathbb{R}^n}\left\{
    \int_{(\mathbb{R}^+)^{L}} \exp\left(- \frac{1}{2}\mathbf{t}^T\mathbf{Q}_\mathbf{s}\mathbf{t} \right)\prod_{\ell=1}^Lp_{S^+}(s_\ell)ds_\ell\right\}  \\ 
    &\times\exp(i\langle\mathbf{t},\mathbf{y}\rangle)\prod_{j=1}^ndt_j\\
    =& \int_{(\mathbb{R}^+)^{L}} \int_{\mathbb{R}^n} \exp\left(- \frac{1}{2}\mathbf{t}^T\mathbf{Q}_\mathbf{s}\mathbf{t} \right) \\ 
    & \times \exp(i\langle\mathbf{t},\mathbf{y}\rangle) \prod_{j=1}^ndt_j \prod_{\ell=1}^Lp_{S^+}(s_\ell)ds_\ell,
\end{align*}
\normalsize
where the second line follows by the derived expression for the characteristic function, and the third line follows by Fubini's theorem since all the integrals are real and finite. We recognize that the term $\exp(-(1/2)\mathbf{t}^T \mathbf{Q}_\mathbf{s} \mathbf{t})$ corresponds to a multivariate Gaussian density with covariance matrix $\mathbf{Q}^{-1}_\mathbf{s}$, though it is lacking the usual determinant term. We obtain the density using the characteristic function of Gaussian variables to finally obtain the result:
\small
\begin{align*}
    p(\mathbf{y}\mid \mathbf{X}) =& (2\pi)^{-n/2}\int_{(\mathbb{R}^+)^{L}}
    \exp\left(- \frac{1}{2}\mathbf{y}^T\mathbf{Q}_\mathbf{s}^{-1}\mathbf{y} \right)\\
    &\times \det(\mathbf{Q}_\mathbf{s})^{-1/2}\prod_{\ell=1}^Lp_{S^+}(s_\ell)ds_\ell.
\end{align*}
\normalsize
In using $\mathbf{Q}^{-1}_\mathbf{s}$ freely, we assumed through the previous steps that $\mathbf{Q}_\mathbf{s}$ is positive-definite. We proceed to prove this fact. Note that $\mathbf{Q}_\mathbf{s}$ is obtained from the sum of $L$ rank-one matrices and a diagonal matrix, where each of the rank-one matrices is $q_\ell^{2/\alpha}s_\ell \nu \boldsymbol{\tau}_\ell\boldsymbol{\tau}^T_\ell$. Let $\mathbf{w}\in\mathbb{R}^n\backslash \{0\}$. Then:
\small
\begin{align*}
    \mathbf{w}^T\mathbf{Q}_\mathbf{s}\mathbf{w} &= \mathbf{w}^T\left(\sigma^ 2 \mathbf{I} + \nu\sum_{\ell =1}^Ls_\ell q_{\ell}^{2/\alpha}\boldsymbol{\tau}_\ell\boldsymbol{\tau}_\ell^T\right)\mathbf{w}\\
    &= \sigma^2 \mathbf{w}^T\mathbf{w} + \nu\sum_{\ell=1}^L s_\ell q_\ell^{2/\alpha}\mathbf{w}^T\boldsymbol{\tau}_\ell\boldsymbol{\tau}^T_\ell\mathbf{w}\\
    &= \sigma^2 \sum_{j=1}^nw_j^2 +\nu\sum_{\ell = 1}^L s_\ell q_\ell^{2/\alpha}\left(\sum_{j=1}^nw_j\tau_\ell(x_j)\right)^2\\
    &>0,
\end{align*}
\normalsize
implying that $\mathbf{Q}_\mathbf{s}$ is positive-definite with probability 1.
\section*{SUPPLEMENTARY MATERIAL}
The Supplementary Material contains technical details and numerical results in pdf. Computer code is freely available at: \href{https://github.com/loriaJ/alphastableNNet}{https://github.com/loriaJ/alphastableNNet}. 
\section*{Acknowledgments}
Bhadra was supported by U.S. National Science Foundation Grant DMS-2014371.
\clearpage

\renewcommand{\bibname}{References}
\renewcommand{\bibsection}{\subsubsection*{\bibname}}

\bibliography{sample, hs-review}

\clearpage
\onecolumn
	\title{Posterior Inference on Shallow Infinitely Wide Bayesian Neural Networks under Weights with Unbounded Variance \\ (Supplementary Material)}
    \maketitle

	\setcounter{page}{0}
	\renewcommand\thepage{S.\arabic{page}} 
	\setcounter{page}{1}

	\setcounter{table}{0}
	\renewcommand{\thetable}{S\arabic{table}}%
    \renewcommand{\theHtable}{Supplement.\thetable}

	\setcounter{figure}{0}
	\renewcommand{\thefigure}{S\arabic{figure}}
    \renewcommand{\theHfigure}{Supplement.\thefigure}
	\setcounter{section}{0}
	\renewcommand{\thesection}{S.\arabic{section}}
    \renewcommand{\theHsection}{Supplement.\thesection}
 
	\setcounter{algorithm}{0}
	\renewcommand{\thealgorithm}{S.\arabic{algorithm}}
    \renewcommand{\theHalgorithm}{Supplement.\thealgorithm}

\setcounter{theorem}{0}
	\renewcommand{\thetheorem}{S.\arabic{theorem}}

\section{Some relevant properties of \texorpdfstring{$\alpha$}{alpha}-stable random variables}\label{sup:alpha_st}
One of the most important properties of stable random variables is the closure property \citep[Property 1.2.1,][]{SamorodnitskyTaqqu}, which states that if $X_i\sim S(\alpha,\nu_i,\beta_i)$ independently for $i=1,2$, then $X_1+X_2 \sim S(\alpha,\xi,\gamma)$, where $\gamma = (\beta_1  \nu_1^{\alpha} + \beta_2  \nu_2^{\alpha})/(\nu_1^{\alpha}+\nu_2^{\alpha})$, and $\xi = ( \nu_1^{\alpha}+ \nu_2^{\alpha})^{1/\alpha}$. This means that the sum of two $\alpha$-stable variables is again $\alpha$-stable. This is a generalization of the well known property of the closure under convolutions of Cauchy ($\alpha=1$) and Gaussian ($\alpha=2$) random variables. In terms of moments, \citet[Property 1.2.16]{SamorodnitskyTaqqu} indicate that for $X\sim S(\alpha,\beta,\nu)$ with $\alpha\in(0,2)$, we have $\mathbb{E}[\lvert X \rvert^{r}] =\infty$ for $r\ge \alpha$ and $\mathbb{E}[\lvert X \rvert^{r}]$ is finite for $0<r<\alpha$. Specifically, this implies that $\alpha$-stable random variables have infinite variance, when $\alpha<2$. 

The property of closure under convolutions is easily generalized to the sum of a sequence of i.i.d. $\alpha$-stable variables, which gives rise to a convergence in a non-Gaussian domain, for $\alpha<2$. Formally, the \emph{generalized} central limit theorem \citep{gnedenko1954limit} proves that for i.i.d. scaled random variables with infinite variance the convergence is no longer to a Gaussian random variable. Rather the convergence is to an $\alpha$-stable random variable. A statement of the theorem is below. 
\begin{theorem}\citep[Generalized central limit theorem,][p. 62]{UchaikinZolotarev}
    Let $X_1,\dots,X_n$ be independent and identically distributed random variables with cumulative distribution function $F(x)$ satisfying the conditions:
    \begin{align*}
        1-F(x) &\sim c\lvert x\rvert ^{-\gamma},\; x\to\infty,\\
        F(x) & \sim d \lvert x\rvert^{-\gamma}, \; x\to \infty,
    \end{align*}
    with $\gamma>0$. Then there exists sequences $a_n \in \mathbb{R}$ and $b_n>0$, such that the distribution of the centered and normalized sum:
    \begin{align*}
        Z_n &= b_n^{-1}\left(\sum_{i=1}^n X_n - a_n\right),
    \end{align*}
    weakly converges to $S(\alpha,1,\beta)$ as $n\to\infty$, where $\alpha=\min(\gamma,2)$, $\beta=(c-d)/(c+d)$, and $a_n$ and $b_n$ are as given in  Table~\ref{tab:params_gclt}.
\end{theorem}
\begin{table}[]
    \centering
    \caption{Parameters for the generalized central limit theorem.}
    \begin{tabular}{ccc}
         \toprule $\gamma$  & $a_n$ & $b_n$  \\\midrule
         $\gamma \in (0,1)$ &  0  & $[\pi(c+d)]^{1/\gamma}[2\Gamma(\gamma)\sin(\gamma\pi/2)]^{-1/\gamma}n^{1/\gamma}$ \\
         $\gamma =1$        & $\beta(c+d)n\ln(n)$ & $(\pi/2)(c+d)n$  \\
         $\gamma \in(1,2)$  &  $n\mathbb{E}[X]$  & $[\pi(c+d)]^{1/\gamma}[2\Gamma(\gamma)\sin(\gamma\pi/2)]^{-1/\gamma}n^{1/\gamma}$\\
         $\gamma = 2$       & $n\mathbb{E}[X]$   & $(c+d)^{1/2}[n\ln(n)]^{1/2}$ \\
         $\gamma > 2$       & $n\mathbb{E}[X]$   & $[(1/2)\mathrm{Var}(X)]^{1/2}n^{1/2}$\\\bottomrule
    \end{tabular}
    \label{tab:params_gclt}
\end{table}

Finally, the Laplace transforms of positive $\alpha$-stable random variables exist. For $\alpha<1$ and $X\sim S(\alpha,1,1)$ the Laplace transform is given by:
\begin{equation*}
    \mathbb{E}[\exp(-\lambda X)] = \exp(-\lambda^{\alpha}),
\end{equation*}
for $\lambda >0$. 

\section{Computation of \texorpdfstring{$q_\ell$}{q ell} and \texorpdfstring{$\tau_\ell$}{tau ell}}\label{sup:comp_q_s_general}
Since the size of $\Lambda$ is $2^n$, indicating an exponential complexity of na\"ive enumeration, we further simplify Equation~\eqref{eq:expectation}. When the input points are arranged in a way that $\tau_\ell$ is not possible, then  $q_\ell$ must be zero. We can identify the elements in $\Lambda$ with positive probabilities by considering arbitrary values of $b_0,w_1,\dots,w_I$. The corresponding $\tau$ function is determined by: ${\tau(\mathbf{x}_j)=\mathrm{sign}(b_0 + w_1x_{j1}+\cdots+w_Ix_{jI})}$, for each $j$. This corresponds to labeling with $+1$ the points above a hyperplane, and with $-1$ the points that lie below the hyperplane. When $I=1$, without loss of generality, let $x_1<\cdots<x_n$. In this case $\Lambda$ corresponds to the possible changes in sign that can occur between the input variables, which is equal to $n$. Specifically, the sign change can occur before $x_1$, between $x_1$ and $x_2$, \dots, between $x_{n-1}$ and $x_n$, and after $x_n$. A similar argument is made in Example 2.1.1 of \citet{DerLee2005}, suggesting the possibility of considering more than one dimensions.

For $I>1$, \citet{harding1967number} studies the possible partitions of $n$ points in $\mathbb{R}^I$ by an $(I-1)$-dimensional hyperplane---which corresponds to our problem, and determines that for points in general configuration there are $O(n^I)$ partitions. \citet{GoodmanPollack} give an explicit algorithm for finding the elements of $\Lambda$ that have non-zero probabilities in any possible configuration. We summarize their algorithm for $I=2$ as Algorithm~\ref{alg:goodman} in Supplementary Section~\ref{sup:algs}. This algorithm runs in a computational time of order $n^I\log(n)$, which is reasonable for moderate $I$. For the rest of the article, we denote the cardinality of elements in $\Lambda$ that have positive probability by $L$, with the understanding that $L$ will depend on the input vectors $\mathbf{x}_i$ that are used and their dimension. This solves the issue of computing deterministically the values of $\tau_\ell$ that have positive probability after integrating through input-to-hidden weights.

Next we compute the probability $q_\ell$ for the determined $\tau_\ell$. For $I=1$, the $q_\ell$s correspond to probabilities obtained from a Cauchy cumulative density function, which we state explicitly in Supplementary Section~\ref{sup:probs_1d}. For general dimension of the input $I>1$, the value of $q_\ell$ is given by $\mathbb{P}(\mathbf{Z}^{(\tau_\ell)} > 0)$, where the $n$-dimensional Gaussian vector $\mathbf{Z}^{(\tau_\ell)}$ has $i$-th entry given by $\tau_\ell(\mathbf{x}_i)(b_0 + \sum_{j=1}^Iw_jx_{ij})$. This implies that $\mathbf{Z}^{(\tau_\ell)}\sim \mathcal{N}_{n}(0,\boldsymbol{\Sigma}^{(\tau_\ell)})$, where the (possibly singular) variance matrix is ${\Sigma^{(\tau_\ell)}_{i,j} = \tau_\ell(\mathbf{x}_i)\tau_\ell(\mathbf{x}_j)(1+\sum_{k=1}^nx_{ik}x_{jk})}$. 
This means we can compute $q_\ell=\mathbb{P}(\mathbf{Z}^{(\tau_\ell)}>0)$ using for example the \texttt{R} package \texttt{mvtnorm}, which implements the method of \citet{BretzGenz} for evaluating multivariate Gaussian probabilities. Since the $q_\ell$s require independent procedures to be computed, this is easily parallelized after obtaining the partitions.
    
\subsection{Computation of \texorpdfstring{$q_\ell$}{q ell} and \texorpdfstring{$\tau_\ell$}{tau ell} in one  dimension}\label{sup:probs_1d}
Assume, since we are in one dimension, that $x_1<\cdots<x_n$. In the one-dimensional case $\Lambda$ consists of the different locations where the change in sign can be located. This corresponds to: 
 \begin{enumerate}
     \item Before the first observation, which corresponds to $\tau(x_k) = 1$ for all $k$, we call this $\tau_0$.
     \item Between $x_j$ and $x_{j+1}$ for some $j=1,\dots,n-1$, which corresponds to $\tau(x_k) = -1$ for $k<j$ and $\tau(x_k)=+1$ otherwise, we call this $\tau_j$.
     \item After $x_n$, $\tau(x_k)=-1$ for all $k$, and we call this $\tau_n$.
 \end{enumerate}
Note that the first and last items correspond to linearly dependent vectors as $\tau_0(x_k) = -\tau_n(x_k)$, for all $k=1,\dots,n$. Now, we compute the probability for the first and last items:
\begin{align*}
    q_{\tau_0} + q_{\tau_n} &= \mathbb{P}(b_0 + w_1 x_1 < 0)+\mathbb{P}(b_0 + w_1 x_n > 0 ) \\
    &= \mathbb{P}(x_1 < -b_0/w_1)+ \mathbb{P}(x_n > -b_0/w_1)\\
    &= \mathbb{P}(x_1 < - C) + \mathbb{P}(x_n > - C)\\
    &= \frac{1}{2} + \frac{1}{\pi} \arctan(x_1) + \frac{1}{2}-\frac{1}{\pi}\arctan(-x_n)\\
    &= 1 + \frac{1}{\pi}(\arctan(x_1) -\arctan(-x_n)),
\end{align*}
 where $C\sim \mathrm{Cauchy}(0,1)$ since $b_0,w_1$ are independent standard normal variables.

 Next, the change in sign occurring between $x_j$ and $x_{j+1}$, yields:
 \begin{align*}
     q_{\tau_j} &= \mathbb{P}(\mathrm{sign}(b_0 + w_1 x_j) = -1, \; \mathrm{sign}(b_0 + w_1 x_{j+1}) = 1) \\
     &= \mathbb{P}(b_0 + w_1 x_j < 0,\; b_0 + w_1 x_{j+1} > 0)\\ 
     &= \mathbb{P}(x_j < -b_0/w_1,\; x_{j+1} > -b_0/w_1)\\
     &= \mathbb{P}(x_j < C < x_{j+1})\\
     &= \frac{\arctan(x_{j+1}) - \arctan(x_j)}{\pi},
 \end{align*}
which corresponds to the desired probability. 

\section{Supporting algorithms}\label{sup:algs}
Algorithm~\ref{alg:goodman} is modified from \citet{GoodmanPollack} for $I=2$ dimensions, which we employ in Algorithm~\ref{alg:samp_full_d}, and has a computational complexity of $\mathcal{O}(n^I\log(n))$ for $n$ input points and a general input dimension $I$. We present Algorithm~\ref{alg:samp_Q} to sample the latent scales $\{s_\ell\}_{\ell=1}^L$ and error standard deviation $\sigma$, which consists of a independent samples Metropolis--Hastings procedure where we sample from the priors, and iteratively update the matrix $\mathbf{Q}_\mathbf{s}$. For computation of the density functions we use the Woodbury formula, and an application of the matrix-determinant lemma, for an efficient update of $\mathbf{Q}_\mathbf{s}$ and to avoid computationally intensive matrix inversions. Algorithm~\ref{alg:samp_Q} has computational complexity of $\mathcal{O}(Ln^2)=\mathcal{O}((n+m)^In^2)$.

\begin{algorithm}[!h]
    \caption{\citep{GoodmanPollack}. Multidimensional sorting for $I=2$} 
    \label{alg:goodman}
     \begin{algorithmic}[-1] 
        \Require Matrix $\mathbf{X}\in\mathbb{R}^{n\times 2}$. \\
        \hspace{-0.6cm}\textbf{Output:} partition vectors $\{\tau_\ell\}_{\ell =1}^L$.
    \For{$i = 1,\dots,n-1$}
    \For{$j = i+1,\dots,n$}
        \State Let $u_j=x_{j,1} - x_{i,1}$, and $v_j = x_{j,2} - x_{i,2}$. If $(u_j,v_j)=(0,0)$ call $j$ ``good''.
        \State Let $u_{n+j} = -u_{j}, v_{n+j}=-v_j$, and let $m_j=m_{n+j}=v_j/u_j$.
    \EndFor
    \State Sort the indices $\{j : j \text{ is good}\}\cup \{n+j: j \text{ is good}\}$ into subsets:
    \begin{enumerate}
        \item for those for which $u_j > 0$, using $m_j$ as key \label{it:one}
        \item for those for which $u_j = 0$ and $v_j>0$
        \item for those for which $u_j < 0$, using $m_j$ as key \label{it:three}
        \item for those for which $u_j = 0$, and $v_j<0$
    \end{enumerate}
    \State From the sorting in Items~\ref{it:one} and~\ref{it:three} we obtain a list of subsets. Say: $\{J_{11},\dots,J_{1p_1},\dots,J_{r1},\dots,J_{rp_r}\}$, where the points with indices $J_{k1},\dots, J_{kp_k}$ constitute an entire subset, and denote $J^{(k)}$ as their union, and there are $r$ subsets all together. Denote by $k(j)$ the number of the subset within which $j$ lies.
    \State For each $k=1,\dots,r$, let: $n_k = \#\{m : 1 \leq m, J_{km} \leq n\}$, the number of points in each ray.
    \State For each good $j$, consider $A_0^{(ij)}=\{i\} \cup (J^{(k(j))} -\{n+1,\dots,2n\})$ as the points in the same ray as $ij$.
    \If{$k(n+j) > k(j)$}
        \State Define the points in the positive side by: $A_+^{(ij)} = \cup_{k=k(j)+1}^{k(n+j)-1} J^{(k)}-\{n+1,\dots,2n\}$, the points in the negative side by: $A_-^{(ij)} = \{1,\dots,n\} - A_+^{(ij)} - A_0^{(ij)} $.
    \ElsIf{$k(n+j) < k(j)$}
        \State Define the points in the positive side by: $A_+^{(ij)}= \cup_{k=k(j)+1}^{r} J^{(k)}\cup \cup_{k=1}^{k(n+j)-1} J^{(k)}$, and the points in the negative side by: $A_-^{(ij)} = \{1,\dots,n\} - A_+^{(ij)} - A_0^{(ij)}$.
    \EndIf
    \State For each $j=i+1,\dots,n$, if $A_0^{(ij)}$ has $L_j$ ordered items denoted by $\{a_{m}:m =1,\dots,L_j\}$. Add $2L_j$ vectors: $\boldsymbol{\tau}_{\ell}$ for $\ell =1,\dots, L_j$, with entry $k$ equal to $+1$ if $k\in A_{+}^{(ij)} \cup \{a_m:m =1,\dots,\ell\}$, and $-1$ otherwise, and the vectors $\boldsymbol{\tau}_{\ell+L_j}$ for $\ell =1,\dots, L_j$, with entry $k$ equal to $+1$ if $k\in A_{+}^{(ij)} \cup \{a_m:m =\ell+1,\dots,L_j\}$, and $-1$ otherwise. 
    \State Repeat using $A_+^{(ij)}$ as the negative, and $A_-^{(ij)}$ as the positive.
    \EndFor
    \State Discard repeated vectors.
    \State \Return the collection of vectors $\{\boldsymbol{\tau}_\ell\}_{\ell=1}^L$
    \end{algorithmic}
\end{algorithm}

\begin{algorithm}[H]
    \caption{A Metropolis--Hastings sampler for $\mathbf{Q}$ by simulating the latent scales from the prior}\label{alg:samp_Q}
		\begin{algorithmic}[-1]
    \Require Vector $\mathbf{y}\in\mathbb{R}^n$, previous latent scales $\{s_{\ell}\}_{\ell = 1}^L$, vectors $\{\boldsymbol{\tau}_\ell\}_{\ell = 1}^L$, partition probabilities $\{q_\ell\}_{\ell=1}^L$, previous variance matrix ${\mathbf{Q}=\nu\sum_{\ell = 1}^Ls_\ell q_\ell^{2/\alpha}\boldsymbol{\tau}_\ell\boldsymbol{\tau}_\ell^T+\sigma^2 \mathbf{I}}$, and magnitude of errors $\sigma^2$. \\
    \hspace{-0.6cm}\textbf{Output:} Updated $\mathbf{Q}$ matrix.
	\For{$k=1,\dots,L$}
		\State Propose $s_k^*\sim S^+(\alpha/2)$.
		\State Define $\mathbf{Q}^{(prop)}=\nu\sum_{\ell \neq k}s_\ell q_\ell^{2/\alpha}\boldsymbol{\tau}_\ell\boldsymbol{\tau}_\ell^T+ \nu s_k^*q_k^{2/\alpha}\boldsymbol{\tau}_k\boldsymbol{\tau}_k^T+\sigma^2 \mathbf{I}$ .
		\State Accept $s_k^*$ with probability $\min\{p(\mathbf{y}\mid \mathbf{Q}^{(prop)}_{1:n,1:n})/ p(\mathbf{y}\mid \mathbf{Q}_{1:n,1:n}),1\}$.\label{step:log_ratio}
        \State If $s_k^*$ is accepted, replace $s_k$ by $s_k^*$.
		\EndFor
    \State Propose $\sigma^2_*\sim \mathrm{Cauchy}^+(0,1)$.
    \State Compute ${\mathbf{Q}^{(prop)}=\nu\sum_{\ell = 1}^Lq_\ell^{2/\alpha}s_\ell\boldsymbol{\tau}_\ell\boldsymbol{\tau}_\ell^T+\sigma^2_* \mathbf{I}}$.
    \State Accept $\mathbf{Q}^{(prop)}$ with probability $\min\{p(\mathbf{y}\mid \mathbf{Q}^{(prop)}_{1:n,1:n})/ p(\mathbf{y}\mid \mathbf{Q}_{1:n,1:n}), 1\}$.
	\State \Return $\mathbf{Q}^{(prop)}$ if it was accepted, \textbf{return} $\mathbf{Q}$ otherwise.
  \end{algorithmic}
\end{algorithm}

\section{Additional numerical results}\label{sup:simulations}
We report MCMC convergence diagnostics, run times, and posterior quantiles of $\mathbf{Q}_\mathbf{s}\mid \mathbf{y}$. We also include simulation results for a variety of functions in one and two dimensions, where we demonstrate the flexibility of the proposed method.

\subsection{MCMC diagnostics and computation times}

\begin{figure}[!htb]
    \centering
    \includegraphics[width =\linewidth]{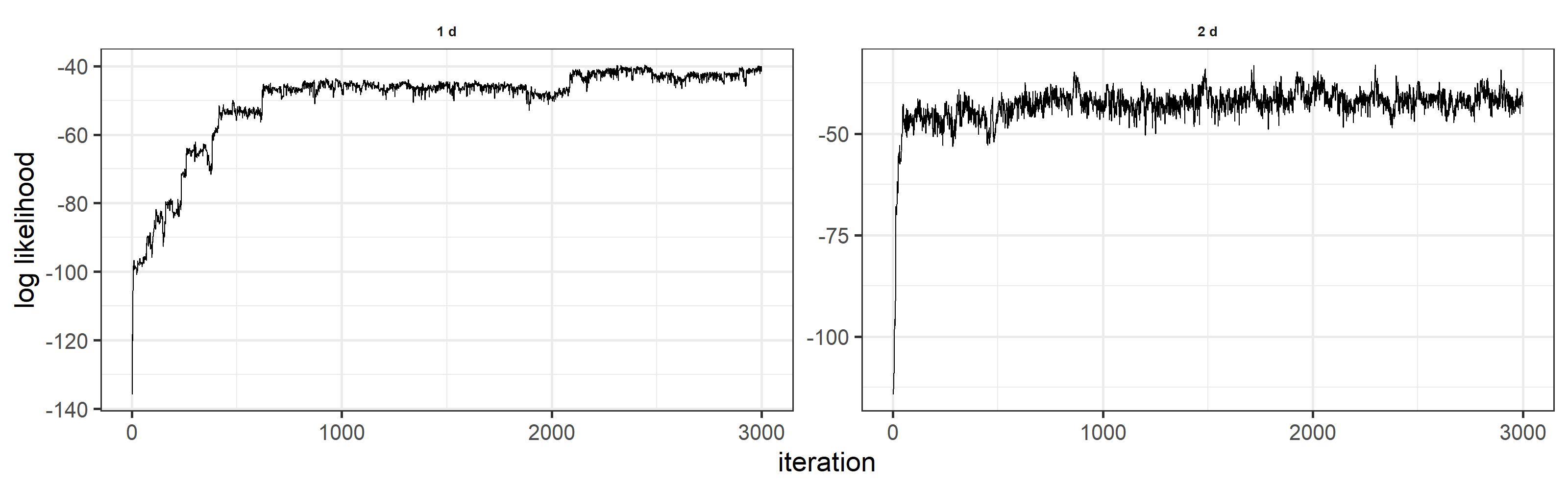}
    \caption{Trace plots for the proposed MCMC sampler (Algorithm~\ref{alg:samp_full_d}) for the simulations in Section~\ref{sec:nums}, indicating good mixing in about 1000 burn-in iterations. \emph{Left:} one dimensional case, \emph{Right:} two dimensional case. Numerical results were obtained using the last 2000 iterations.}
    \label{fig:trace_plots}
\end{figure}

\begin{table}[!htb]
    \centering
    \caption{Total (in seconds) and per iteration (in milliseconds) Computation Times for the Simulations in Section~\ref{sec:nums}, for the Competing Methods.} 
    \label{tab:times_simulations}
    \begin{tabular}{c c c c c c }
    \toprule
                 &     & Stable & GP MLE & GP Bayes & Bayes NNet  \\ \midrule
Total time (s) & 1-d & 24.047 & 0.067 & 18.073 & 6.91 \\
             & 2-d & 1339.543 & 0.178 & 22.185 & 7.193 \\ \midrule
      Per iteration time (ms) & 1-d & 8.016 & 13.400 & 0.602 & 2.303 \\
        & 2-d & 446.514 & 35.600 & 0.740 & 2.398 \\
    \bottomrule
    \end{tabular}
\end{table}

\subsection{Posterior quantiles of \texorpdfstring{$\mathbf{Q}_\mathbf{s}\mid \mathbf{y}$}{Qs}}\label{sec:post_Qs}
We present results showing the posterior $\mathbf{Q}_\mathbf{s} \mid \mathbf{y}$ is non-degenerate under a stable limit. In the case of a vanilla GP limit the prior on the kernel converges to a point mass, resulting in a degenerate posterior \citep{pmlr-v139-aitchison21a,pmlr-v202-yang23k}. However, as shown by our Corollary~\ref{cor:random_Q}, $\mathbf{Q}_{\mathbf{s}}$ is stochastic under a stable limit. Figure~\ref{fig:stoch_q} displays the posterior 25th, 50th, and 75th quantiles of $\mathbf{Q}_{\mathbf{s}}\mid \mathbf{y}$ for the examples shown in Sections~\ref{sec:1d} and~\ref{sec:2d}, confirming the posterior of $\mathbf{Q}_\mathbf{s} \mid \mathbf{y}$ is non-degenerate. This is a key feature that distinguishes the current work from prior works on GP limits.
\begin{figure}[!h]
    \centering   
    \includegraphics[scale=0.1]{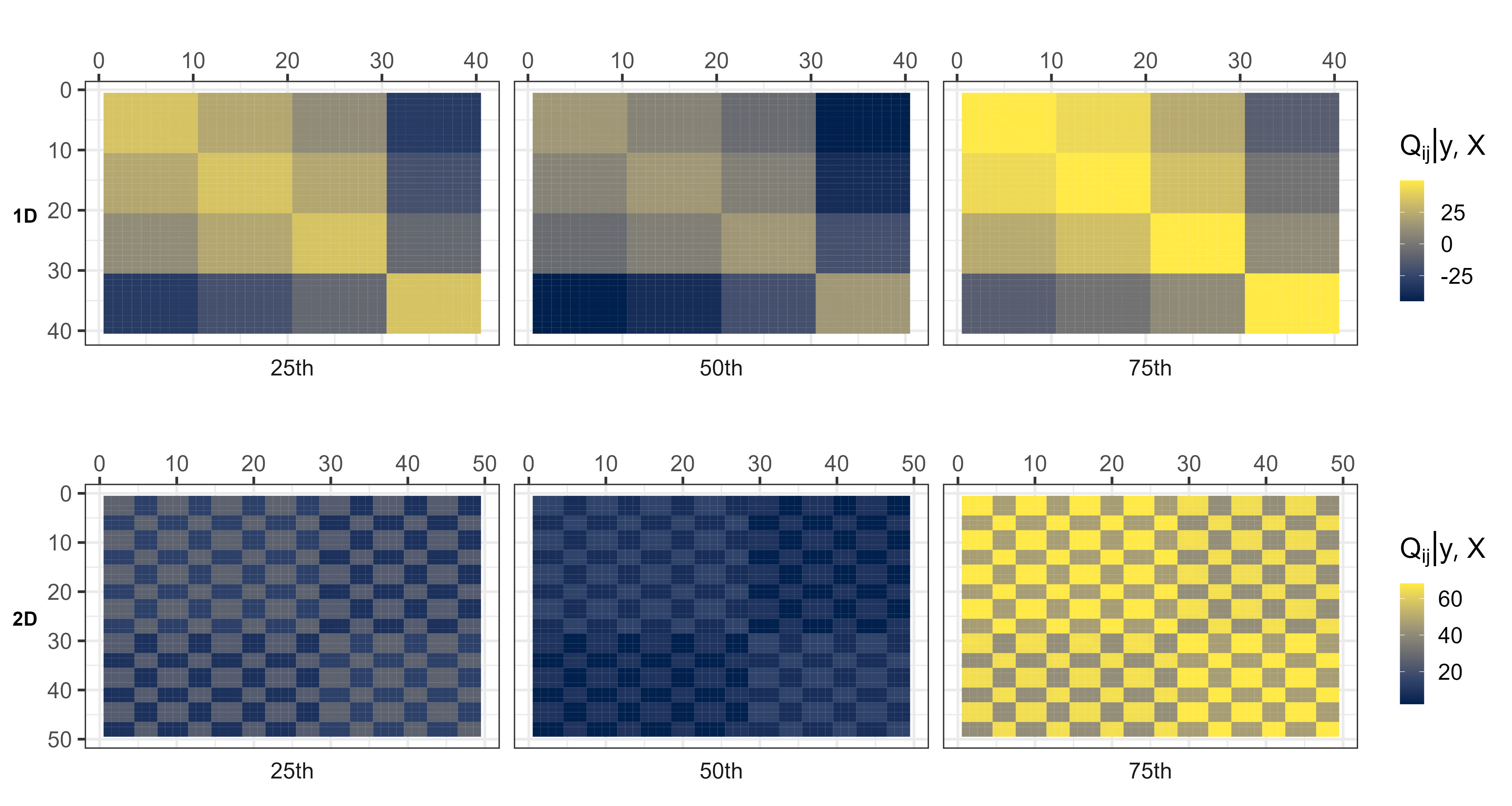}
    \caption{Posterior quantiles (\emph{left:} 25, \emph{center:} 50, \emph{right:} 75) of the kernel $\mathbf{Q}_\mathbf{s}$, for the 1-$d$ (\emph{upper})  and 2-$d$ (\emph{lower})  examples, clearly showing a non-degenerate posterior for $\mathbf{Q}_\mathbf{s}$.}
    \label{fig:stoch_q}
\end{figure}

\subsection{Additional results in one dimension}
We show, using a variety of one-dimensional functions, that the Stable procedure results in better performance in presence of discontinuities, while performing similarly to GP-based methods or finite width networks for smooth functions. Posterior uncertainty quantification results are omitted. 
    \paragraph{One-dimensional one-jump function.}
    Consider the function with a single jump given by $f(x)=5\times \mathbf{1}_{\{ x> 0\}}$. We use forty equally-spaced observations between $-2$ and $2$ with a Gaussian noise with standard deviation of $0.5$. We display the obtained results on Figure~\ref{fig:one_one_jump}, with optimal hyper-parameters $\alpha^*=1.1$ and $\nu^*=1$.
    \begin{figure}[!h]
        \centering
        \includegraphics[width=\linewidth]{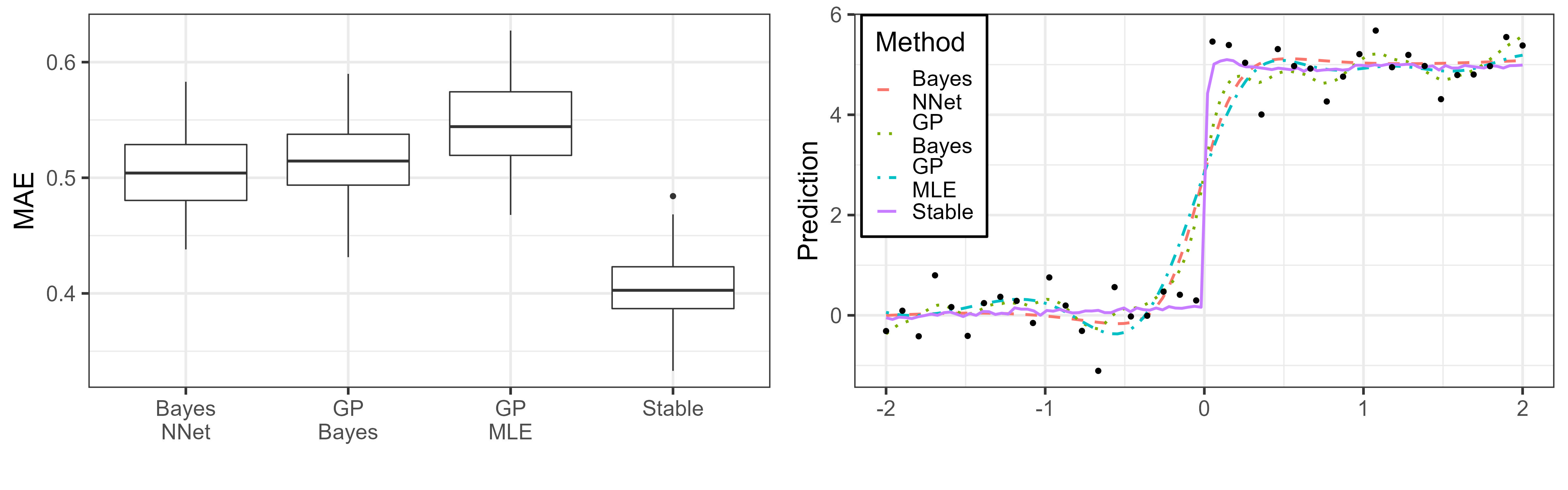}
        \caption{Out-of-sample error comparison and predictions with scatter plot of the observations for the four methods for a function with a single jump.}
        \label{fig:one_one_jump}
    \end{figure}
    \paragraph{One-dimensional two-jump function.}
    Consider the function with two jumps given by $f(x)=5\times \mathbf{1}_{\{-2/3\leq x< 2/3\}}$. We use forty equally-spaced observations between $-2$ and $2$ with a Gaussian noise with standard deviation of $0.5$. We display the obtained results on Figure~\ref{fig:one_d_two_jump}, with optimal hyper-parameters $\alpha^*=1.3$ and $\nu^*=1$.
    \begin{figure}[!h]
        \centering
        \includegraphics[width=\linewidth]{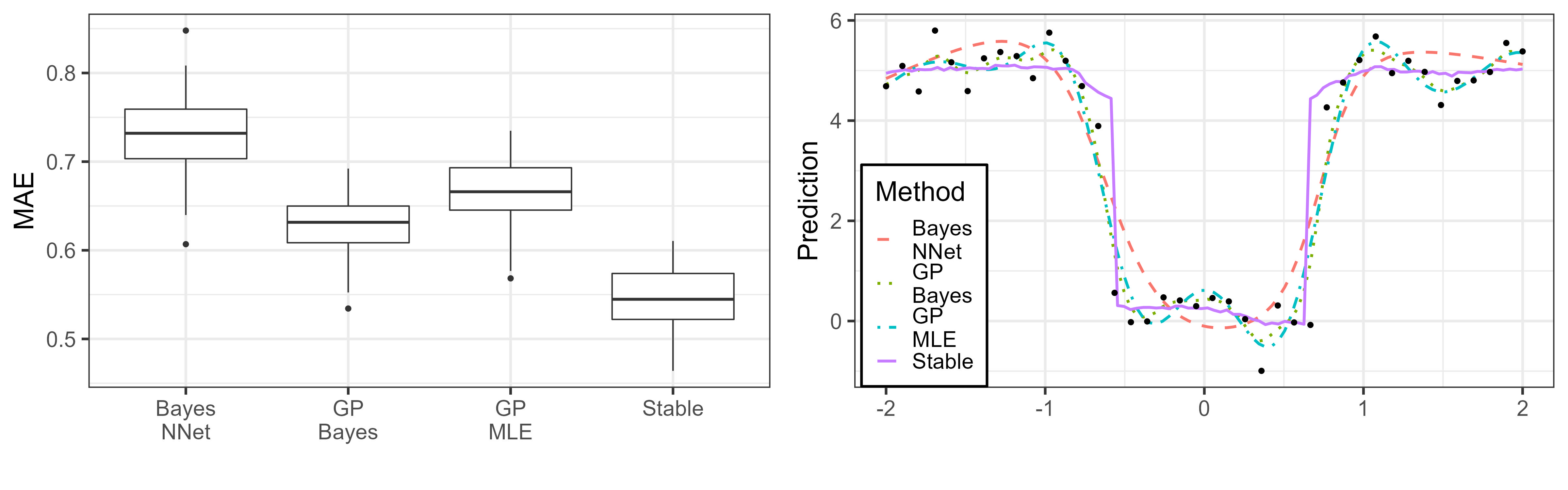}
        \caption{Out-of-sample error comparison and predictions with scatter plot of the observations for the four methods for a function with two jumps.}
        \label{fig:one_d_two_jump}
    \end{figure}
    
    \paragraph{One-dimensional piece-wise smooth.}
    Consider the piece-wise smooth function with a single jump, given by 
    \begin{align*}
     f(x)=\begin{cases}
     -2x^2 + 8, & x\geq 0, \\
     -3 x +2, & x < 0.
     \end{cases}
    \end{align*}
    We use forty equally-spaced observations between $-2$ and $2$ with a Gaussian noise with standard deviation of $0.5$, and display the obtained results in Figure~\ref{fig:one_d_pw_smooth},  using the optimal hyper-parameters $\alpha^*=1$ and $\nu^*=1$.
    \begin{figure}[!htb]
        \centering
        \includegraphics[width=\linewidth]{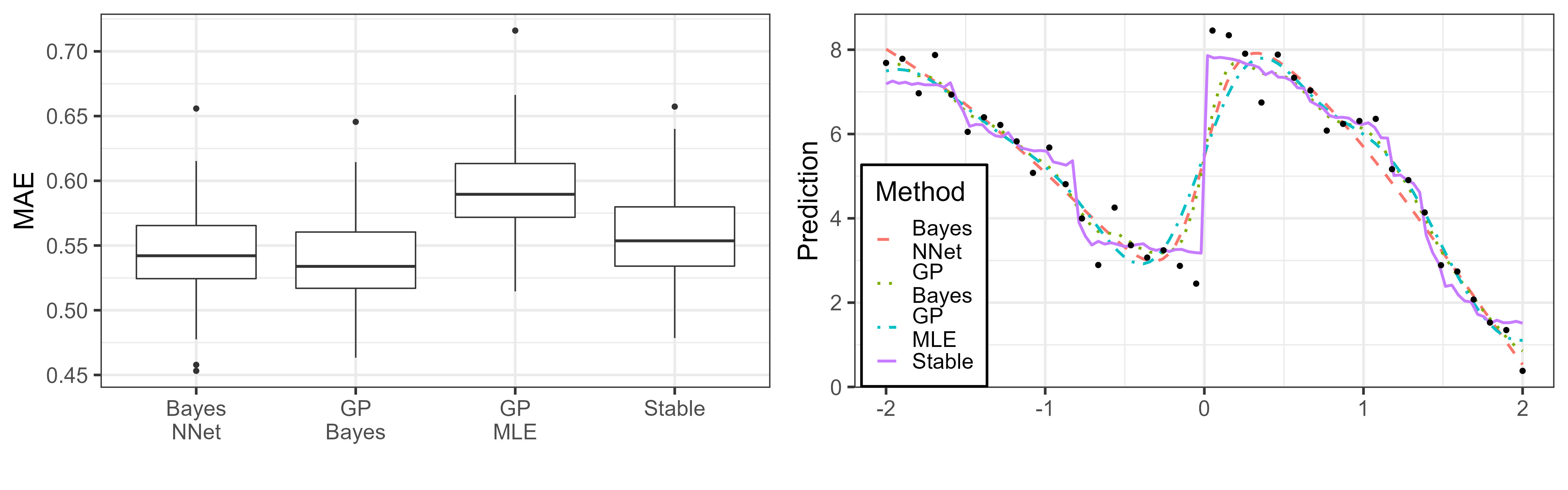}
        \caption{Out-of-sample error comparison and predictions with scatter plot of the observations for the four methods for a piece-wise smooth function.}
        \label{fig:one_d_pw_smooth}
    \end{figure}
    \paragraph{One-dimensional smooth function.}
    Finally, consider the smooth function $f(x)=-2\cos(x)^2 + 3\tanh(x) -2x$. We use forty equally-spaced observations between $-2$ and $2$ with a Gaussian noise with standard deviation of $0.5$. The obtained results are shown in Figure~\ref{fig:smooth_1d}. The optimal hyper-parameters are $\alpha^*=1.9$ and $\nu^*=1$.
    \begin{figure}[!htb]
        \centering
        \includegraphics[width=\linewidth]{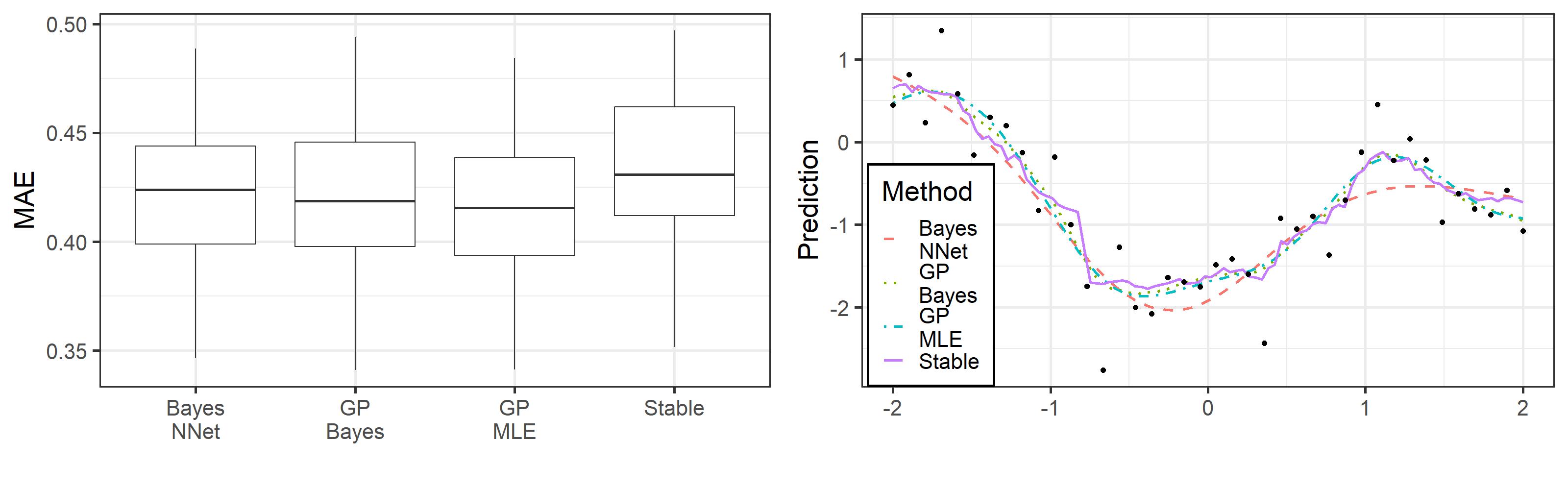}
        \caption{Out-of-sample error comparison and predictions with scatter plot of the observations for the four methods for a smooth function.}
        \label{fig:smooth_1d}
    \end{figure}

    \subsection{Additional results in two dimensions}
    We show, using a variety of two-dimensional functions, that the Stable procedure results in better performance in presence of discontinuities, while performing similarly to GP-based methods or finite width networks for smooth functions. Posterior uncertainty quantification results are available, but omitted. 
\paragraph{Two-dimensional one-jump function.}
Consider the function $f(x_1,x_2) = 5\times \mathbf{1}_{\{x_1 + x_2 > 0\}}$. Using the grid of points on $[-1,1]^2$ detailed in Section~\ref{sec:nums}, and additive Gaussian noise with $\sigma=0.5$, we obtain the predictions results as shown in Figure~\ref{fig:one_jump_2d}.  Optimal hyper-parameters are $\alpha^*=0.1$ and $\nu^*=1$.

\begin{figure}[!htb]
    \centering
    \includegraphics[width=\linewidth]{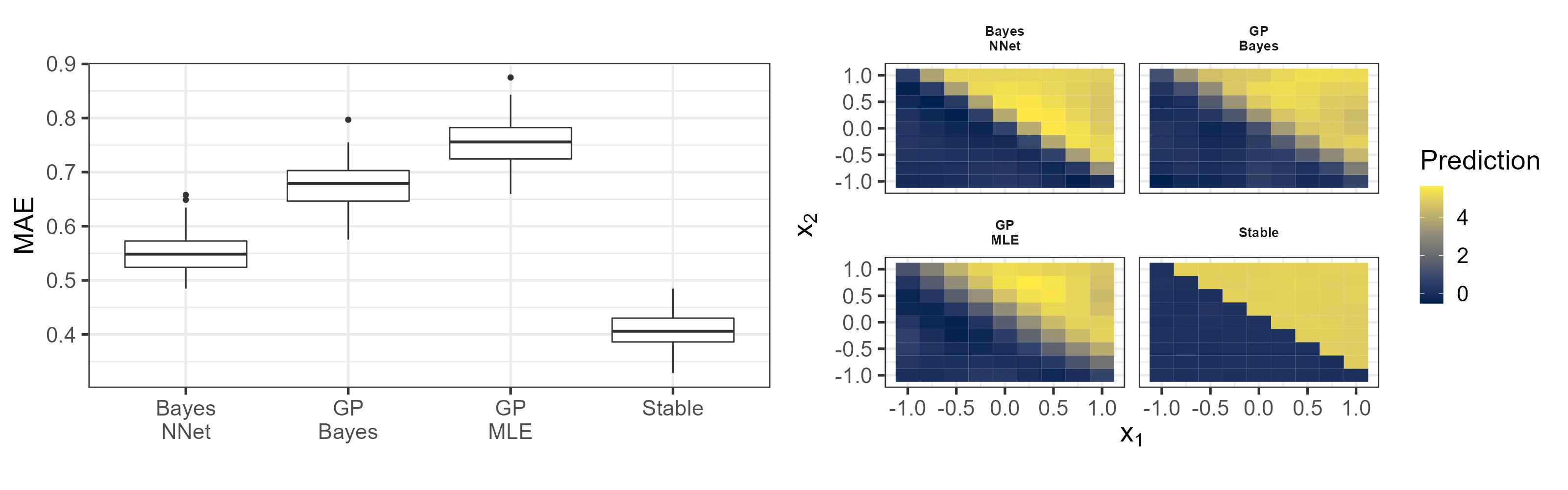}
    \caption{Out-of-sample error comparison and predictions for the four methods for a jump function in two-dimensions.}
    \label{fig:one_jump_2d}
\end{figure}

\paragraph{Two-dimensional smooth edge.}
Consider the function $f(x_1,x_2) = 5\times \mathbf{1}_{\{x_1^2 + 2x_2 -0.4 > 0\}}$. Note that the jump boundary is determined by a smooth curve. Using the grid of points on $[-1,1]^2$ detailed in Section~\ref{sec:nums}, and additive Gaussian noise with $\sigma=0.5$, we obtain the predictions results as shown in Figure~\ref{fig:smooth_jump_2d}. Optimal hyper-parameters are $\alpha^*=1$ and $\nu^*=1$.
The Stable method is able to capture the smoothness of the jump boundary without losing predictive power.

\begin{figure}[!htb]
    \centering
    \includegraphics[width=\linewidth]{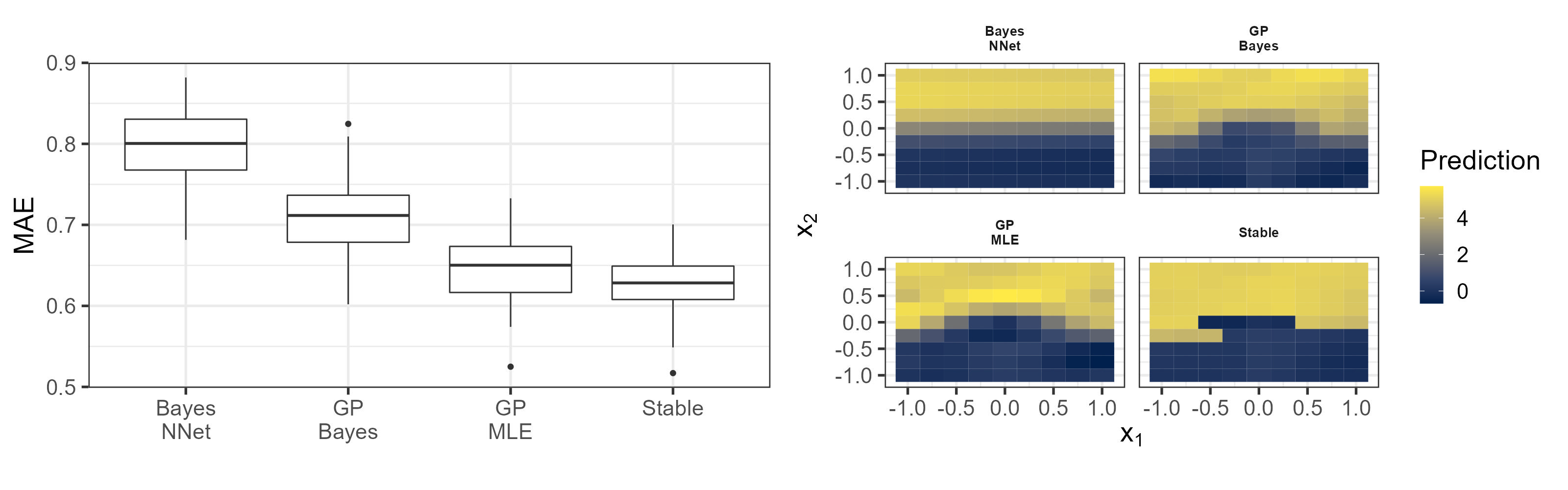}
    \caption{Out-of-sample error comparison and predictions for the four methods for a function with a jump that is parameterized by a smooth function.}
    \label{fig:smooth_jump_2d}
\end{figure}

\paragraph{Two-dimensional smooth function.}
Consider the smooth function $f(x_1,x_2) = x_1^2 + x_2^2 - x_1x_2$. We use the grid of points on $[-1,1]^2$ detailed in Section~\ref{sec:nums}, and additive Gaussian noise with $\sigma=0.5$. Since this function is continuous, it would be expected that the Stable method would perform similarly as the competing methods. We obtain the predictions results as shown in Figure~\ref{fig:smooth_2d}. The optimal hyperparameters are $\alpha^*=1.3$ and $\nu^*=1$.
\begin{figure}[!htb]
    \centering
    \includegraphics[width=\linewidth]{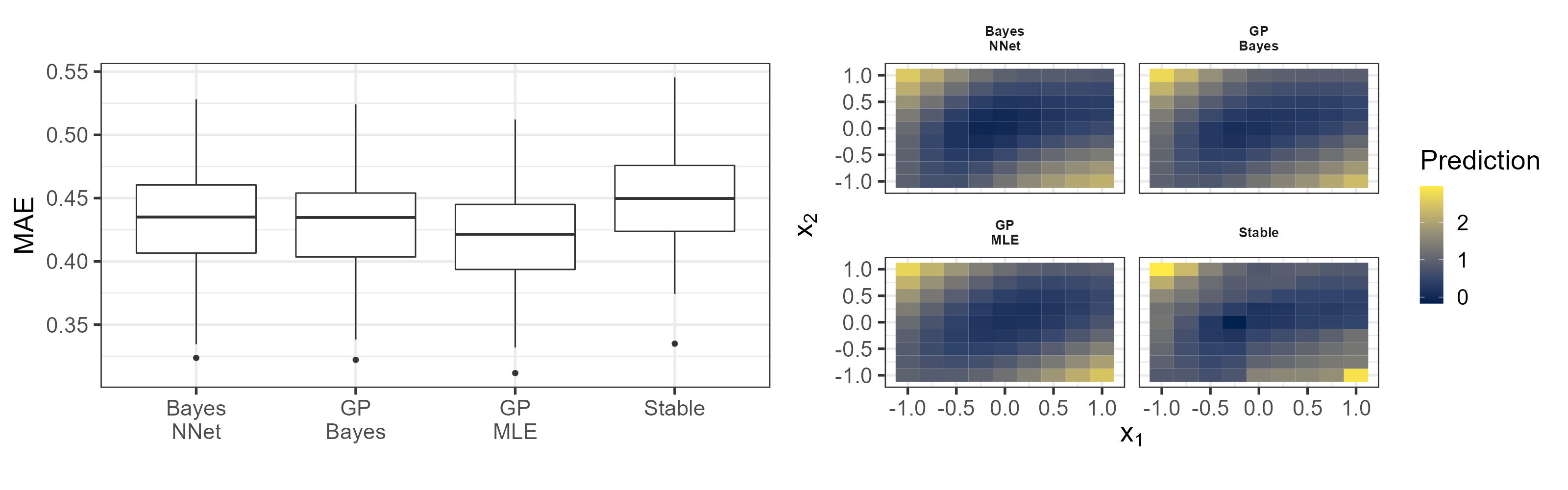}
    \caption{Out-of-sample error comparison and predictions for the four methods for a smooth function in two-dimensions.}
    \label{fig:smooth_2d}
\end{figure}

\section{Ablation study on \texorpdfstring{$\alpha$}{alpha} and \texorpdfstring{$\nu$}{nu}}\label{sec:ablation}
We perform an ablation study on the tuning parameters $\alpha$ and $\nu$ for the examples we consider in Section~\ref{sec:nums}. Figure~\ref{fig:ablation} displays the mean absolute error for a grid of the tuning parameters in all two examples. The results show that the smaller $MAE$s are mostly concentrated close to $\nu=1$. These results hint that $\nu=1,\alpha=1$ are good default choices.
\begin{figure}[!h]
    \centering
    \includegraphics[scale=0.1]{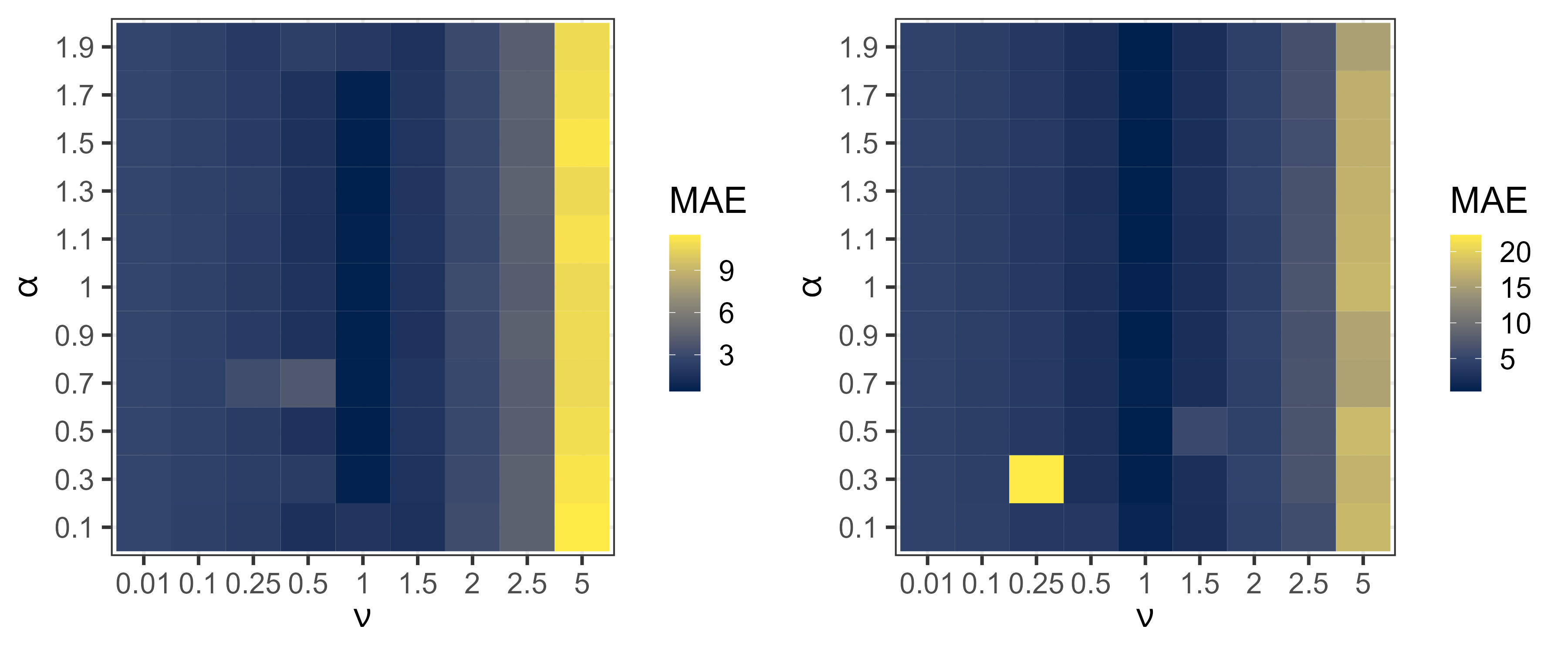}
    \caption{Ablation study for the two numerical examples. Displaying the mean absolute error for varying $\alpha$ and $\nu$ parameters. \emph{Left:} one dimension. \emph{Right:} two dimensions.}
    \label{fig:ablation}
\end{figure}

\section{Results on S\&P 500}\label{sec:common_data}
We provide out of sample prediction results for S\&P 500 closing prices\footnote{Obtained from \url{https://www.nasdaq.com/market-activity/index/spx/historical} between July 1, 2019 and June 30, 2021.}, using $336$ and $169$ as the training and testing set sizes respectively, with $\alpha^*=1.9$ and $\nu^*=1$. Table~\ref{tab:mae_SP} displays the improved performance from using the Stable method compared to the three competing methods, as measured by the mean absolute error.
\begin{table}[!h]
    \centering
    \caption{Mean absolute error of predictions and standard errors computed on 10 random training--testing splits in the S\&P 500 closing prices by method.} 
    \label{tab:mae_SP}
       \begin{tabular}{c c c c c}
     \toprule
          & Stable & GP MLE & GP Bayes & Bayes NNet \\ \hline
         MAE & 0.054 & 0.071 & 0.054 & 0.210 \\
         (SE) & (0.008) & (0.006)  &  (0.005)  & (0.016)\\
    \bottomrule
    \end{tabular}
\end{table}

\end{document}